\title{Aggregating Probabilistic Judgments}
 \author{Magdalena Ivanovska
 \institute{University of Oslo\\ Oslo, Norway}
 \email{magdalei@ifi.uio.no}
 \and
 Marija Slavkovik
 \institute{University of Bergen\\
 Bergen, Norway}
 \email{\quad marija.slavkovik@uib.no}
 }
\newtheorem{example}{Example}[section]
\newtheorem{definition}{Definition}[section]
\newtheorem{theorem}{Theorem}[section]
\newtheorem{proposition}{Proposition}[section]
\newcommand{\La }{\mbox{$\mathcal{L}$} }       
\newcommand{\A}{\Phi}
\newcommand{\Ct}{\Gamma}  
\newcommand{\Cp}{\hat{\Gamma}}
\newcommand{\Js}{J}
\newcommand{\Jp}{\hat{\Js}}
\newcommand{\ai}{\varphi}
\newcommand{\Pc}{P}
\newcommand{\Ph}{\hat{P}}
\newcommand{\acj}{\mathcal{J}}
\newcommand{\acp}{\mathbb{P}}
\newcommand{\app}{\hat{\mathbb{P}}}
\newcommand{\qv}{\mathbf{c}}
\newcommand{\q}{c}
\newcommand{\paf}{(\Phi, N, \Ct, \Cp)}
\newcommand{\caf}{(\Phi, N, \Ct)}
\newcommand{\qu}{q}
\newcommand{\F}{F}
\newcommand{\Fh}{\hat{F}}
\newcommand{\Agg}{F}
\newcommand{\Fq}{\hat{F}_S}
\newcommand{\avg}[1]{{E_{\Ph}(#1)}}
  \newcommand{\ie}{\mbox{\textit{i.e.}, }}
 \newcommand{\eg}{\mbox{\textit{e.g.}, }}
 \newcommand{\Eg}{\mbox{\textit{E.g.}, }}
\newcommand \argmax[1] {\underset{{#1}}{\mbox{argmax}}}
\newcommand \argmin[1] {\underset{{#1}}{\mbox{argmin}}}
\begin{document}

\maketitle

\begin{abstract}
In this paper we explore the application of methods for classical judgment aggregation in pooling probabilistic opinions on logically related issues. For this reason, we first modify the Boolean judgment aggregation framework in the way that allows handling probabilistic judgments and then define probabilistic aggregation functions obtained by generalization of the classical ones. In addition, we discuss essential desirable properties for the aggregation functions and explore impossibility results. 
\end{abstract}

\section{Introduction}
Judgment aggregation (JA) is concerned with aggregating sets of binary truth valuations assigned to logically related issues \cite{ListPuppe2009,GrossiP:2014}. Various collective decision making problems  in artificial intelligence can be modelled as JA problems,  \eg problems of constructing agreements, such as  finding a collective goal in multi-agent systems \cite{Synthese12new,COIN10}. In agreement reaching problems each agent in a group  is a source of judgments and also typically affected by the collective choice resulting from the aggregation of individual judgments. %\MScomment{COMMENT P1C1L7: 
For example,  I am a citizen voting on a referendum that decided not to impose global warming curbing methods, but I am also a citizen that has to live with the consequences of that collective decision.
A typical JA example \cite{ListPuppe2009} is one concerning  three issues:  Current $CO_2$ emissions lead to global warming ($p$),  If current $CO_2$ emissions lead to global warming, then we should
reduce $CO_2$ emissions ($p\rightarrow q$), We should reduce $CO_2$ emissions ($q$).  The individual sets of judgments are as in Table~\ref{tab:co2example}. As observed from the example, pooling the truth valuations on each issue does not always lead to a consistent set of collective judgments. JA designs and studies aggregators that produce a consistent outcome.

\begin{table}
\centering
\begin{tabular}{r|ccc}
& $p$ & $p \rightarrow q$ & $q$ \\ \hline
Minister 1   & true & true                          & true\\
Minister 2   & true & false                         & false\\
Minister 3   & false & true                         & false\\\hline
Majority     & true & true                           & false 
\end{tabular}
%\medskip
\caption{An example of a judgment aggregation.} \label{tab:co2example} 
\vspace{-0.2cm}
\end{table}

However, aggregation problems are not always Boolean, because the individual judgments on whether an issue is true or false are not always certain. We give an example.

\begin{example}
\label{ex:running} 
You want a recommendation for a specific hotel, ``The Grand Palace'', however, you want that recommendation to be  compiled specifically for you. You are interested in:  
\begin{itemize}
\item a hotel  close to the centre or well connected with public transport ($s \vee t$);
\item a hotel that  is a unique experience ($x$), 
\item a hotel that is a good value for money ($a$).  
\end{itemize}
  The information that you can get  from online information sources (IS), like booking.com, TripAdvisor, etc.,  can be processed automatically, by pooling reviews from recommendations regarding  ``The Grand Palace'' hotel. An example of such collection of opinions is given in Table~\ref{tab:hotel}.  What we obtain from each IS is the likelihood that an issue is true.    You can find information online about  $s \vee t$ (second column), about whether ``The Grand Palace'' hotel is a unique experience  (third column) and also whether the hotel is recommended by the users, $h$, (fourth column). However, it is not enough that the hotel is recommended in the reviews. For you,  a hotel should be recommended ($h$) iff   both $s \vee t$ and $a$ are true, i.e. $((s \vee t) \wedge a) \leftrightarrow h$. Information about $a$ may not be available to extract. Assume that you define that a hotel is a good value for money ($a$) if it is not more than 80 Euro per night ($\neg e$) or  if it is a unique experience, \ie  when $(\neg e \vee  x) \leftrightarrow a$ holds.   Then the information you need to extract is whether $\neg e$ is true. This is given in the sixth column of Table~\ref{tab:hotel}.
  
  \begin{table}[h!]
\centering 
%\vspace{0.5cm}
\begin{tabular}{r| ccccc}
                   &   $s \vee t$& $x$   & $h$ & $a$   & $\neg e$\\ \hline %& $b \wedge p$ & $e$ & $h$
IS 1   		&   0.6              &  1       & 1   & - &1 \\ % &0.6& 0.3 & 0.3 \\
IS 2  		 &     0.7          &  0.6    & 0.5 & - & 1 \\ %&0.6&0.5 &0.7\\
IS 3  		 &     0.1          &  0.4    & 0.2 & - & 1\\ %&0.3& 1 & 0.5\\ 
IS 4  		 &     0.8          &  0.8   & 0.9 & - &1 \\ % &0.6& 0.3 & 0.3 \\
IS 5  		 &     0.7          &  0.7   & 0.4  & - &1\\ %&0.6&0.5 &0.7\\
IS 6   		&    0.5              &  0.6   & 0.3 & - & 1   %&0.3& 1 & 0.5\\ 
\end{tabular}
%\medskip
\caption{An example of a source aggregation.} \label{tab:hotel}  
\vspace{-0.2cm}
\end{table}
%
%
%
% by analysing and merging information on hotels from online sources. 
%Your requirements are: 
%
% \begin{itemize}
%\item a hotel  close to the centre or well connected with public transport ($s \vee t$);
%\item a hotel that  is a unique experience ($x$),
%\item a hotel that is a good value for money ($a$).  
%
%
%\item a hotel is recommended  ($h$) iff    both $s \vee t$ and $a$ are true, i.e. $((s \vee t) \wedge a) \leftrightarrow h$
%\item a hotel is a good value for money ($a$) if it is not more than 80 Euro per night ($\neg e$) or  if it is a unique experience, \ie  when $(\neg e \vee  x) \rightarrow a$ holds.  
%\end{itemize}
%Each online information source (IS)   Thus for example the reviews from {\em Booking.com} will yield one truth valuation on the requirements,   {\em TripAdvisor} another etc. However, the information sources will yield  probabilistic judgments on the requirements  rather than a Boolean truth value assignment. Table \ref{tab:hotel} gives an example of six such pooled information sources. What we obtain from each IS is the likelihood that an issue is true.   
\end{example} 
 
 We want to be able to aggregate likelihood judgments like the ones represented in the rows of Table~\ref{tab:hotel}, but  into a set of Boolean judgments: should the hotel be recommended and for which reasons.  To achieve this purpose, we explore how methods from classical judgment aggregation can be adjusted to deal with probabilistic statements as judgments. Thus,  we  extend the propositional logic JA framework typically used \cite{ListPuppe2009,GrossiP:2014,LangPSTV15,EndrissGHL16} using  the logic of likelihood \cite{Halpern03}, and design probabilistic aggregation functions based on the classical ones.  Thus, intuitively, what were desirable properties for aggregation in the classical case, remain desirable properties in the probabilistic framework.
 
   Our framework  allows sources to have uncertain probabilistic judgments that are {\em rational} and subject to inevitable probabilistic constraints,  but also to aggregate into a  collective judgment set that is   Boolean  and   subject to a specific set of propositional constraints. 
 Frameworks for representing non-binary judgments have been considered, see \eg \cite{GrossiP:2014} for an overview, however no specific methods for aggregation have been designed for these frameworks.  Rather, impossibility characterisations have been studied showing which sets of desirable properties cannot be mutually satisfied. Here we propose specific classes of aggregators for the framework we introduce.

There is a certain amount of literature on probabilistic opinion pooling (\eg see \cite{MartiniS2017} for a detailed survey) which is concerned with aggregating probability functions (representing opinions of agents) into a single one. The defined properties of the aggregating functions are similar to those in JA theory, and similarly as there, impossibility results are proved.  However, opinion pooling presumes that every agent has its probabilistic judgments defined on a $\sigma$-algebra of events (or, equivalently on a set of possible worlds). Despite the inherent consistency, this is not always a realistic requirement. 
%Namely, we cannot expect that the information sources  provide the probabilities of all the conjunctions of literals of the variables in order to determine the probability function on the full $\sigma$-algebra determined  by these variables. Neither can we expect these probabilities to be determined by the likelihood information like the one provided in Table \ref{tab:hotel}. 
In our framework, we allow the agents to express their probabilistic opinions on any (logically related) propositional language statements (equivalently, on any subset of a $\sigma$-algebra of events),  and, moreover, these opinions can be imprecise, i.e., expressed through likelihood inequalities (equivalently, a set of probability functions is provided by each source.) In this sense our work is more comparable to variants of opinion pooling that presume a general agenda \cite{DietrichList2010} or deal with imprecise probabilities \cite{stewart2018}. Aggregation functions in probabilistic opinion pooling are typically averaging functions like (weighted) linear or geometrical average. Here we take the approach of defining aggregation functions by generalizing judgment aggregation functions based on representative voting.

The paper is structured as follows. In Section~\ref{sec:framework} we introduce the judgment aggregation framework based on the logic of likelihood. In Section~\ref{sec:Probabilistic aggregation functions} we demonstrate how to generalise classical judgment aggregation functions into functions that handle likelihood judgments, and we also introduce some new classes of judgment aggregation functions. In Section~\ref{sec:properties} we discuss desirable properties of aggregation functions and revisit the classical impossibility results. In Section~\ref{sec:related} we discuss related work and in Section~\ref{sec:conclusion}  we make our conclusions and outline directions for future work.%We first define our framework providing justification of the modelling choices. It is convenient for us to build the new framework upon the preliminaries from classical JA which are also given in this section. We then define what we consider to be desirable and adequate properties for likelihood aggregators in this new framework and state  the ``typical" impossibility results. We next focus on defining classes of aggregators by relaxing some of the requirements,  observing at the same time the connections with the corresponding ``classical'' aggregators.   Lastly we discuss related work, and indicate directions for future work.

\section{Framework}
\label{sec:framework}

We distinguish between an {\em agenda setter} and {\em information sources}. %{that may or may not be \em agents}.
The agenda setter identifies the set of issues, \ie the {\em agenda} for which Boolean judgments need to be made. The agenda setter can also set  additional relations, which we call {\em propositional constraints}, that should hold among the agenda issues. The information sources are modelled as sets of {\em likelihood formulas} subject to different relations  called {\em probabilistic constraints}. The probabilistic constraints model the natural and contextual properties of the issues. 

\subsection{Judgment aggregation model}
To model the agenda and the propositional constraints we use a set $\La$  of propositional logic formulas. 
An agenda is a finite set  $\A ~\subset~\La$, 
\begin{equation}
\label{eq:agenda}
\A=\{\varphi_1,\ldots,\varphi_m \}\;,
\end{equation}
s.t. $\varphi_i$ is neither a tautology nor a contradiction. We call the elements of the agenda  {\it issues}. The set of {\em propositional constraints} $\Ct \subset \La$  represents special relations that should hold among the agenda issues described by the agenda setter.  $\Ct$ should be satisfiable, and we allow $\Ct=\{\top\}$.  In Example~\ref{ex:running}, we have $\A= \{  s\vee t, h, x,e, a\}$, and $\Ct = \{(\neg e \vee  x) \leftrightarrow a, ( (s \vee t) \wedge a) \leftrightarrow h \}$.

The agenda setter is interested in  aggregating collections of judgments on the agenda issues from various information sources into a set of crisp (Boolean) judgments that is consistent with $\Ct$. A {\em crisp judgment} on $\ai \in \A$ is either $\ai$ or $\neg \ai$. A {\em crisp judgment set} $\Js$  is a set of crisp judgments. \Eg the judgments of Minister 2 in  Table\ref{tab:co2example} can be represented as a crisp judgment set $\Js_2=\{p, \neg(p\rightarrow q), \neg q \}$.
We introduce the notation $\A^\cup = \A \cup \{ \neg \ai \mid \ai \in \A\} $. Then a crisp judgment set $\Js$ is a subset of $\A^\cup$. % containing either $\varphi$ or $\neg \varphi$, for \MIcomment{some of the agenda issues $\varphi$}. 
The set $\Js$ is {\em consistent} if $\Js \cup \Ct $ is a consistent set of formulas in classical propositional logic. $\Js$ is {\em complete} if it contains one crisp judgment for each of the issues in the agenda. If the crisp judgment set $\Js$ is consistent and complete, we say that it is {\em rational}. 
Given an agenda $\A$ and propositional constraints $\Ct$, the set of all consistent  and complete, \ie rational crisp judgment sets is  $\acj$.
 
We model the information sources   as   {\em sets of likelihood  judgments} on $\A^{\cup}$. 
 A {\it likelihood judgment} on the issue $\varphi\in \A^{\cup}$ is a simple likelihood formula of the type:
\begin{equation}
\label{eq:likelihood formula}
\ell(\varphi)\geq^* a\;,
\end{equation}
where $\geq^*\in \{\geq,  =\}$ and $a\in [0,1]$.\footnote{The formula (\ref{eq:likelihood formula}) is an instance of the logic of likelihood in \cite{FaginHalpernMegiddo90}, \cite{Halpern03} that consists of Boolean combinations of linear likelihood formulas of the type  
$a_1\ell(\varphi_1)+\ldots+a_n\ell(\varphi_n)\geq b$,
where $a_i, b$ are real numbers, and $\varphi_i$ are pure propositional formulas.
Likelihood formulas are interpreted in probability spaces where the term $\ell(\ai)$ is interpreted as the probability of the set of worlds (outcomes) at which $\ai$ is true.
}

The likelihood judgment $\ell(\varphi)\geq a$ expresses that the likelihood (probability)\footnote{ In this paper we interpret likelihood as probability and we use the two terms interchangeably. Note that, however, likelihood can also be interpreted as other measure of belief, see \cite{Halpern03}.} of the statement $\varphi$ being true is at least $a$. This intuition immediately implies that $\ell(\neg \varphi)\leq 1-a$. This and other entailments we mention later can formally be proved in the axiomatic system for the logic of likelihood that consists of axioms for propositional reasoning, reasoning about inequalities, and the following axioms for probabilistic reasoning given in \cite{Halpern03}:\\
	 (L1)  $\ell(\varphi)\geq 0$, \\     
	 (L2)  $\ell(\top)=1$,\\     
	 (L3)  $\ell(\varphi)=\ell(\varphi\wedge\psi)+\ell(\varphi\wedge \neg \psi)$,\\
	 (L4)  From $\varphi \leftrightarrow \psi$ infer $\ell(\varphi)=\ell(\psi)$.
 
Having $\ell(\neg \varphi)\leq 1-a$ gives us an upper, but not a lower bound for the likelihood of $\neg \varphi$. Therefore, we ask that an explicit  judgment for the likelihood of $\neg \varphi$ is given. 

$\ell(\varphi) = a$ is a stronger statement than $\ell(\varphi) \geq a$ expressing that the likelihood of $\varphi$ being true is exactly $a$. In this case, a judgment for $\neg\varphi$ is implied,  namely, $\ell(\neg \varphi) = 1- a$.

Each of the  information sources is represented as 
a {\it set of likelihood judgments} $\Jp$. The set $\Jp$  has one likelihood judgment on each of the issues in  $\A^{\cup}$:
\begin{align}
\label{eq:judgment}
\Jp=&\{\ell(\varphi)\geq^* a_{\varphi} \mid \varphi\in \A^{\cup}\}\;, 
\end{align}
where $\geq^*\in \{=, \geq \}$, $a_{\varphi}\in [0,1]$.

Note that providing likelihood formulas for both $\varphi$ and $\neg\varphi$ in Eq.(\ref{eq:judgment}) is equivalent with  providing intervals for the likelihood of either $\varphi$ or $\neg\varphi$  (hence the information sources are free to do that) but for the discussion in this paper the formulation in Eq.(\ref{eq:judgment}) is a more suitable one.

A set of likelihood judgments is always  {\it complete} in the sense that  it contains a likelihood judgment for each of the issues. This assumption does not limit the freedom of not having a specific likelihood estimate
%(hence, a specific likelihood formula) 
for a given issue $\varphi$. To represent the absence of a specific likelihood, or an ``abstention''  on an issue $\varphi$ we use the tautology $\ell(\varphi)\geq 0$. We usually omit explicitly writing these type of formulas in the examples of judgment sets. Also, if we have $\ell(\varphi) = a\in \Jp$, we can omit including $\ell(\neg \varphi) = 1-a$ as an element of $\Jp$.

Given a finite set of $n$ information sources $N=\{1,\ldots, n\}$, a {\em likelihood profile}:
\begin{equation}
\label{eq:profile}
\Ph= (\Jp_1, \ldots, \Jp_n )\;,
\end{equation}
is a collection of  sets of likelihood judgments for an agenda $\A$, each representing one information source $k \in N$. We slightly abuse notation and write $\Jp_k \in \Ph$ to denote that $\Jp_k$ is the $k$-th likelihood judgment set in $\Ph$: 
\begin{equation}
\label{eq:judgment set in a profile}
\Jp_k=\{\ell(\varphi)\geq^* a^k_{\varphi} \mid \varphi\in \A^{\cup}\}\;,
\end{equation}
where $a^k_{\varphi}\in [0,1]$, for $k=1,\ldots,n$. The profile set of likelihood judgments that will be obtained from the information in Example~\ref{ex:running} is given in Table~\ref{tab:prof}.

We require that the  sets of likelihood judgments in the profile are {\em rational}. We now define what are rational likelihood judgments.

%\MScomment{Example here and one more table which we use in the examples with the aggregators later}

\begin{table*}[h!]
  \begin{minipage}{\textwidth}
\centering 
\includegraphics[width=0.95\textwidth]{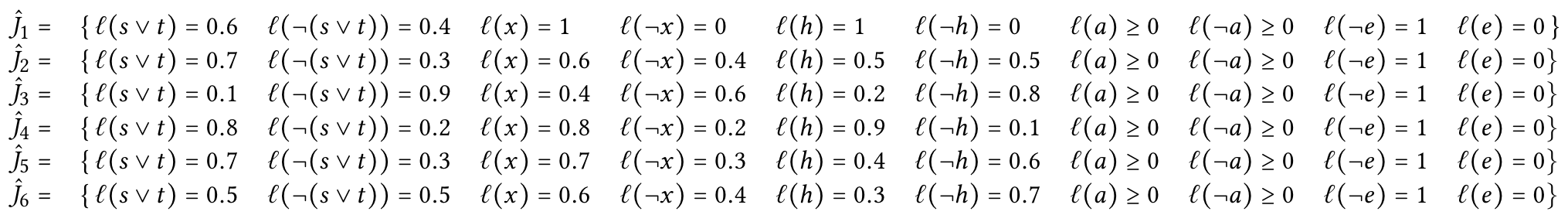}
\caption{A profile of likelihood judgments for the hotel recommendation example.} \label{tab:prof}  
\vspace{-0.2cm}
\end{minipage}
\end{table*} 

\subsection{Rationality of probabilistic judgment sets}

%\MIcomment{I don't understand comments P3C2L6, P3C2L13}

A probabilistic judgment set is  {\it consistent} if it is a consistent  set of formulas in the logic of likelihood  (according to the canonical definition of consistency). Note that a probabilistic judgment set is not always consistent. Consider, for example, the agenda $\A=\{p_1, p_1\wedge p_2, p_1\wedge \neg p_2 \}$ and  $\Jp=\{\ell(p_1)=0.5, \ell(p_1\wedge p_2)\geq 0.4, \ell(p_1\wedge \neg p_2)\geq 0.7 \}$. The set $\Jp$ is an inconsistent set of formulas, because it implies $\ell(p_1)\geq 1.1$ by the axiom (L3) of likelihood logic.  Furthermore, note that a judgment set $\Jp$ defined 
as in (\ref{eq:judgment}) has to satisfy $a_{\varphi}+a_{\neg\varphi}\leq 1$, for every $\varphi\in \A$, in order to be  consistent.

In the probabilistic case, consistency and completeness are not enough of conditions for rationality. For example, $\Jp=\{\ell(p_1)\geq 0.3, \ell(p_1\wedge p_2)\geq 0.4, \ell(p_1\wedge \neg p_2)\geq 0.1\}$ is a consistent set. However, the second formula in it implies that $\ell(p_1)\geq 0.4$, which is   stronger than the existing $\ell(p_1)\geq 0.3$ and, as such, is a more valuable judgment. 
We can formalize the notion of a stronger judgment as follows: if $\ell(\varphi) \geq^* a$ implies  $\ell(\varphi) \geq^* b$ we will say that $\ell(\varphi) \geq^* a$ is a {\it stronger judgment} than $\ell(\varphi) \geq^* b$. For example, $\ell(\varphi) = a$ implies  $\ell(\varphi) \geq a$.
To ensure that we always have the strongest possible judgments in the consistent  judgment sets, we introduce the notion of a {\em final  judgment}. A consistent probabilistic judgment set is {\it final} if it does not imply stronger judgments than the ones it contains.\footnote{{We recognize that it some cases it can be hard to check if a judgment set is final or not. In that sense, we note that this property of the judgment sets would not inflict the application of the judgment aggregation methods defined below, but it would influence the relevance of the produced results and the quality of the decision.}}
 
Probabilistic judgments can be subject to {\em probabilistic constraints} $\Cp$, where $\Cp$ is a set of likelihood formulas to denote that certain combinations of issues must have a certain likelihood. For example for agenda $\A=\{p_1, p_2, p_3\}$, where $p_1$, $p_2$, and $p_3$ represent the 
three possible states of a random variable, we can have  the integrity constraint $\ell(p_1)+\ell(p_2)+\ell(p_3)=1$. Unlike the constraints $\Ct$ which are given by the agenda setter, the probabilistic constraints $\Cp$ describe facts of the world and we assume that all information sources produce probabilistic judgment sets that are consistent with the probabilistic constraints.

A probabilistic judgment set $\Jp$ is {\it rational} if it is  complete and final, and $\Jp\cup \Cp$ is a consistent set of likelihood formulas. Given an agenda $\A$ and probabilistic constraints $\Cp$, the set of all  rational likelihood judgment sets is denoted by $\hat{\acj}$.
A profile is rational if all the judgment sets in it are rational.

We call a {\em probabilistic aggregation frame} the tuple $\paf$, where $\A$ is an agenda, $N$ is a set of  information sources,  $\Cp$ is probabilistic constraints to be satisfied by the individual judgments of the sources, and $\Ct$ are propositional constraints to be satisfied by the collective judgment. We call a {\em crisp aggregation frame} the tuple  $\caf$, but now $\Ct$ are constraints to be satisfied by the individual judgments as well.

\section{Aggregating likelihood judgments} 
\label{sec:Probabilistic aggregation functions}

%There are several types of JA functions that can be defined. First, as in ``classical'' JA, we distinguish between resolute and irresolute aggregators \cite{LangPSTV15}: a resolute aggregator  assigns a unique judgment set to a profile of judgments, while an irresolute aggregator assigns a {\em set of judgment sets} to a judgment profile. Second,

We distinguish between {\em crispifying aggregators} and {\em direct aggregators}. Crispifying aggregators first aggregate the likelihood profile into likelihood judgment set(s)   and then use given threshold values to  ``crispify'' these sets.   Direct aggregators assign a crisp judgment set (or sets) to a likelihood profile directly. 
 
%that assign crisp judgment set(s) directly to the profile. 

%Generally in JA, resolute aggregators tend to be subject to Arrowian  \cite{Arrow:63}  impossibility results. Namely consider the ``standard'' JA properties\footnote{These will be discussed further in the section in more detail.}: universal domain, non-dictatorship, unanimity and systematicity \cite{LangPSTV15,GrossiP:2014}.  It is simple to extend these properties to the likelihood judgment framework and show that no resolute aggregation function (with or without a crispifying vector) exists that satisfies all these properties\footnote{Details of that proof can be found here https://anon.to/wtelFr.}\MScomment{P3C2L6 mislam deka go mrzelo da go chita linkot. Ovde ja bi da gi vratime definiciite shto se vo apendix nazad} In particular, the ``problematic" property is systematicity which requires that the collective judgment on an issue depends only on the individual judgments on that issue (independence) and that all issues are aggregated in the same way (neutrality). 
%{\em Because we are interested in defining concrete  and non-trivial aggregators, we focus on irresolute aggregation functions.}{\MScomment{P3C2L13: Since issues are not symetric why is it reasonable to require systematicity. Isn't neutrality closure to negating all reports on an issue. Mislam oti pojma nema sho zbroi, ama sum sigurna deka jas pojma nemam shto zboram}} 

The rest of this section is organized as follows: We first consider details of the ``crispification step'' and introduce the formal definition of an aggregator; 
%then in Section \ref{subsec:properties} we define the properties that a likelihood operators can satisfy; 
in Section \ref{subsec:comparison} we propose a way to compare the likelihood aggregators with the classical ones; and finally in Section \ref{subsec:Crispifying aggregators} and Section \ref{subsec:direct aggregators} we introduce several likelihood aggregators and analyse %their properties and
connections with the corresponding classical ones.
%before giving the formal definitions of likelihood aggregators.  
\subsection{Crispifying}
\label{subsec:crispifying}

Given a probabilistic judgment, we can obtain a crisp judgment  by choosing a  threshold coefficient $\q\in [0,1]$.  This coefficient can be by default set to 0.5 for each issue, but otherwise we assume that it is specified by the agenda setter, in response to the question: How likely should an issue be in the least in order to be accepted as true?
 We define the  judgment crispifying function $\textrm{crisp}()$ as follows: 

\begin{equation}\label{eq:jcrisp}
\textrm{crisp}(\ell(\ai) \geq^* a, \q)=\left\{
\begin{array}{ll}
\{\ai\}, &\textrm{ if } a \geq\q\\
\emptyset, & \textrm{ otherwise } 
\end{array}
\right.
\end{equation}
%\MIcomment{zabeleska od Lian: $\varphi$ i $\emptyset$ ne se vo ista kategorija, i.e. ne bila dobra definicijata. Moze da stavime eventualno $\{\varphi\}$, ama neznam..}
According to  the above definition, if a likelihood judgment on a statement $\ai\in\A^{\cup}$ has a (minimal) likelihood strictly greater than or equal to $\q$, we assign it a Boolean judgment $\ai$. 
Otherwise, no Boolean judgment is assigned for this issue. 
If we decide to be strict on an issue $\varphi$ and accept it only if true, we set $c_{\varphi}=1.$

We can crispify a probabilistic judgment set $\Jp$ by crispifying each of its judgments. We distinguish between {\em issue-wise   crispifying} when a different coefficient is assigned for every agenda issue and {\em uniform crispifying} when the same coefficient is used for every agenda issue.

Let $\qv = (\q_{\varphi})_{\varphi\in\A^{\cup}}$ be a vector of coefficients, where each $\q_{\varphi}\in [0,1]$, and $\q_{\varphi}+\q_{\neg\varphi}> 1$. We call this a vector of crispifying coefficients. A judgment set crispifying is defined as follows:
\begin{equation}
\label{eq:crispifying}
\textrm{crisp}(\Jp, \qv)=
\bigcup \{\textrm{crisp}(\ell(\ai) \geq^* a_{\varphi}, \q_{\varphi}) \mid \ell(\ai) \geq^* a_{\varphi} \in \Jp \}
\end{equation}

The condition $\q_{\varphi}+\q_{\neg\varphi}> 1$, along with the  consistency requirements $a_{\varphi}+a_{\neg\varphi}\leq 1$, assures that only one element of the set $\{\varphi, \neg\varphi\}$ is in $\textrm{crisp}(\Jp, \qv)$ for each $\varphi\in\A$. If $c_{\varphi}=c$, for every $\varphi\in\A^{\cup}$, and some $c\in (1/2,1]$, the crispifying defined by Eq.(\ref{eq:crispifying}) is uniform, and we denote it by $\textrm{crisp}(\Jp, c)$. Note that the constraint $c>1/2$ follows from the consistency requirement on the crispifying vector.

Observe that the obtained crisp set of judgments may be incomplete. Further, we allow the agenda setter to freely choose whichever coefficients she wants for any of the issues,  depending on the given context. This freedom of choice is done here for simplicity. We can, however, argue that a freely chosen vector of crispifying coefficients may be seen as imposing a certain level of independence on the issues.  We can argue that if for two issues $\varphi_1,\varphi_2 \in \A$ when it holds that $\{\varphi_1 \rightarrow \varphi_2\} \in \Ct$, then it should not be allowed that $c_1 < c_2$, \ie in this case we would need the additional constraint $c_1\geq c_2$. Also, if $\varphi_1 \leftrightarrow \varphi_2 \in \Ct$, we would need to have $c_1=c_2$, \ie logically equivalent issues should have the same likelihood threshold requirement. Restricting the values in $\qv$ with respect to $\Ct$ is an interesting aspect of our framework and is a line of future work we intend to pursue.

We now give a formal definition for an aggregator.

 \begin{definition}
 \label{def:aggregator}
 Let $\paf$ be a probabilistic aggregation frame and let $\app$ be the set of all rational likelihood profiles for it, while  $[0,1]^{2m}$, for $m=|\A|$,  is  the set of all crispifying vectors $\qv$. Let $\hat{f}$ be a mapping from  $\app$ to $\hat{\acj}$.
 A crispifying judgment aggregation function $\hat{F}$ is a mapping from $\app \times[0,1]^{2m}$  to  $\mathcal{P}(\acj)$, i.e. $\Fh (\Ph, \qv)\subseteq \acj$, where $\Fh (\Ph, \qv)= \textrm{crisp}(\hat{f}(\Ph), \qv)$ is the classical judgment set obtained by crispifying the likelihood judgment set $\hat{f}(\Ph)$. A direct judgment aggregation function $\F$  is a mapping from $\app$   to  $\mathcal{P}(\acj)$, i.e. $\F (\Ph)\subseteq \acj$.
 \end{definition}
{According to the above definition an aggregator is defined for {\em every} rational profile and {\em always} produces {\em rational} judgment sets as a result, properties that are later introduced as {\em universal domain} and {\em rationality}, correspondingly.  We embed these properties in the definition since they are the most basic desirable properties of the aggregation process, usually satisfied by design. However, while the universal domain is satisfied by all the aggregators defined below, we sometimes deviate from Definition \ref{def:aggregator} by defining some aggregators that are not rational.
 
 Notice also that, even thought we insist on the collective judgment being crisp, in every crispifying aggregator (and, implicitly, in many direct aggregators) an intermediate probabilistic aggregate is available if needed in the decision process.}

\subsection{Classical vs probabilistic aggregators}
\label{subsec:comparison}

 Same as they do  in  \cite{LangPSTV15}, we define a classical irresolute aggregation function $\bar{\Agg}(P)$  as one that  maps each crisp rational profile  $P$ of judgments to a nonempty set of crisp rational judgment sets.  

Consider a crisp judgment set $\Js \in \acj$. We define its  corresponding  probabilistic judgment set $J^*$ in the following way:   
 
\begin{equation}
 \Js^*=\{\ell(\varphi)= 1, \ell(\neg\varphi)=0 \mid \ai \in \Js\}
 \end{equation}
Note that $\textrm{crisp}(\Js^*, \qv)=\Js$ for every vector of crispifying coefficients $\qv = (\q_{\varphi})_{\varphi\in\A^{\cup}}$  such that $c_{\varphi}=1$,  for $\varphi\in\Js$. 

Given a crisp profile $P=(\Js_1,\ldots,\Js_n)$ we define %$P^*$ as 
$$P^* = (\Js^*_1,\ldots, \Js^*_n),$$
to be its correspondent probabilistic  profile. 
We can now define what it means for a likelihood aggregator to generalize a crisp aggregator. 
\begin{definition}  Let $\paf$ be a probabilistic aggregation frame. Consider the corresponding crisp frame $\caf$ and let $\acp$ be the set of all rational likelihood profiles for it. Let   $P^*$ be a corresponding profile for a $P \in \acp$. A direct likelihood aggregator $F$ generalizes a crisp aggregator $\bar{\Agg}$ if $\bar{F}(P) = F(P^*)$ for each $P \in \acp$. A crispifying likelihood aggregator $\hat{F}$ generalizes a crisp aggregator $\bar{\Agg}$ if there exists $\qv \in [0,1]^{2m}$ such that $\bar{F}(P) = \hat{F}(P^*, \qv)$ for each $P \in \acp$. 
\end{definition}

%\MIcomment{The above definition enables us to transfer the impossibility result from the classical judgment aggregation framework to our likelihood framework. Namely, if we suppose that there exists a likelihood aggregation rule that satisfies ........ then the restriction of this rule to the set of classical zero, one likelihood judgments will define a classical aggregation rule that satisfies the same properties, and this is in contradiction to the classical impossibility result.}

%We compare the aggregators we define with crisp aggregators and show when ours generalize them.  Due to space restrictions we cannot give the definitions of crisp aggregators, and we refer the reader to \cite{LangPSTV15} for these.  \MIcomment{mislam posledniot pasos cel da go brisime, prvata recenica ja napisav prethodno vo vovedot, a od [19] dobro ke e da gi napisime definiciite, ne se mnogu, imame mesto.}

\subsection{Crispifying aggregators}
\label{subsec:Crispifying aggregators}

We now consider two classes of crispifying aggregators. 
\paragraph{Uniform quota aggregators}
 Quota aggregators assign a crisp judgment to elements in $\A^\cup $ in two steps.  First, the collective likelihood of $\varphi$ is assigned. The collective likelihood for $\varphi$ is the maximal $a\in [0,1]$ such that the  number of agents in the profile who assign a likelihood of at least $a$ reaches a given quota $q$.  Second, the collective likelihood judgments are crispified using a crispifying coefficient.    The formal definition follows.

\begin{definition} Given a profile $\Ph$, a crispifying vector $\qv$ and a quota $\qu\in \mathbb{N}$, $1 \leq \qu\leq n$,  we define the {\em uniform quota function} $\hat{f}_{\qu}$:
 %\F_{\qu}(\Ph,\qv )=\{\varphi: |\{k: a^k_{\varphi} > \q_{\varphi}\}|>q\}\;.
\begin{equation}
\label{eq:uniform quota function}
\begin{array}{l}
%\Fh_{\qu}(\Ph)= \{\ell(\varphi)\geq^* a_\varphi\;: \max_{0\leq a_\varphi \leq 1} (|\{k: a^k_{\varphi} > a_{\varphi}\}|>q)\}\;.\\
\hat{f}_{\qu}(\Ph)= \{\ell(\varphi)\geq^* a_\varphi\;: a_\varphi=\max_{0\leq a\leq 1} (|\{k: a^k_{\varphi} \geq a\}|\geq q)\}\;.\\
\Fh_{\qu}(\Ph, \qv) = \textrm{crisp}(\hat{f}_{\qu}(\Ph),\qv).
\end{array}
\end{equation}
\end{definition}

As an illustration, consider the example in Table~\ref{tab:prof}. For  a uniform $\qv=(0.6, \ldots, 0.6)$ and a quota $\qu = 3$ we obtain $\Fh_{3}(\Ph,0.6)  = \{ s \vee t,  x, \neg  h,  \neg e\}$, which is inconsistent with  $\Ct$. %(Note that, since in this example the judgments provide exact probabilities, the probabilities of the negations easily follow from the given ones and the assumed consistency of the judgment sets.)

If $\qu = n$, we obtain the unanimous function that selects as collective only those judgments  $\ai$ who are assigned a likelihood $a_{\varphi}^k\geq\q_{\varphi}$  by all the agents $k$.  For $\qu=\lfloor\frac{n}{2}\rfloor+1$  we obtain the {\em issue-by-issue majority function}, which we denote with $M$.  Under issue-by-issue majority function the profile is aggregated by selecting the judgments that are in the most (more than a half) of the judgment sets in the profile.  The set $M(\Ph,\qv)=\Fh_{\lfloor\frac{n}{2}\rfloor+1}(\Ph,\qv)$ is called a {\em majoritarian set} for $\Ph$ and $\qv$.

The majoritarian set of a crisp profile $P$  is denoted $m(P)$ and  contains
all the elements of  $\A^\cup$ that are supported by a strict majority of the individual judgment sets:
\begin{equation}
  m(P)=\{\varphi:\varphi\in \A^\cup, |\{k:\varphi\in J_k\}|> \frac{n}{2}\}
\end{equation}
The following theorem can be easily proved:
\begin{theorem}
Let $\paf$ be a probabilistic likelihood frame, and $\qv=(c_{\varphi})_{\varphi\in \A^{\cup}}$ be a vector of coefficients such that $c_{\varphi}>0$, for every $\varphi\in \A^{\cup}$. Then $m(P)=M(P^*,\qv)$, where $P^*$ is the corresponding probabilistic profile to the crisp profile $P$.
\end{theorem}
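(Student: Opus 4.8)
The plan is to unfold the two definitions and check, literal by literal, that $m(P)$ and $M(P^*,\qv)$ accept exactly the same elements of $\A^{\cup}$. First I would make the profile $P^*$ explicit. Since $P$ is a crisp rational profile, each $\Js_k$ is a complete crisp judgment set, so for every issue exactly one of $\varphi,\neg\varphi$ lies in $\Js_k$; consequently in $\Js_k^*$ every literal $\varphi\in\A^{\cup}$ is assigned the value $a^k_\varphi = 1$ if $\varphi\in\Js_k$ and $a^k_\varphi = 0$ otherwise. Writing $n_\varphi := |\{k : \varphi\in\Js_k\}|$, the classical majoritarian set is, by definition, $m(P) = \{\varphi\in\A^{\cup} : n_\varphi > n/2\}$, so it suffices to show $M(P^*,\qv)$ equals this set.

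Next I would compute the collective likelihood $\hat{f}_{\qu}(P^*)$ for the majority quota $\qu = \lfloor n/2\rfloor + 1$. Fix $\varphi\in\A^\cup$ and consider $g(a) := |\{k : a^k_\varphi \geq a\}|$ on $[0,1]$. Because every $a^k_\varphi\in\{0,1\}$, we have $g(0) = n$ and $g(a) = n_\varphi$ for all $a\in(0,1]$; in particular $g$ is a non-increasing step function, so the maximum in Eq.(\ref{eq:uniform quota function}) is attained, and since $\qu\le n$ the value $a=0$ is always admissible. Hence $a_\varphi = 1$ when $n_\varphi\geq\qu$ and $a_\varphi = 0$ when $n_\varphi<\qu$. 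It then only remains to record the elementary fact that $n_\varphi\geq\lfloor n/2\rfloor + 1$ iff $n_\varphi > n/2$ (a one-line case split on the parity of $n$), so that $a_\varphi = 1$ exactly when $\varphi\in m(P)$, and $a_\varphi = 0$ otherwise.

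Finally I would run the crispification step of Eq.(\ref{eq:crispifying}). Each coefficient satisfies $0 < c_\varphi\leq 1$, so $\textrm{crisp}(\ell(\varphi)\geq^* a_\varphi, c_\varphi) = \{\varphi\}$ precisely when $a_\varphi = 1$ (because $1\geq c_\varphi$) and equals $\emptyset$ when $a_\varphi = 0$ (because $c_\varphi > 0$); this is the only place the hypothesis $c_\varphi>0$ enters. Taking the union over $\varphi\in\A^{\cup}$ yields $M(P^*,\qv) = \textrm{crisp}(\hat{f}_{\qu}(P^*),\qv) = \{\varphi\in\A^{\cup} : n_\varphi > n/2\} = m(P)$, as claimed.

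I do not expect a genuine obstacle here — the statement is, as the paper says, ``easily proved''. The only points needing a moment of care are: (i) making the $\{0,1\}$-valued step-function argument for $g$ precise enough that the $\max$ in Eq.(\ref{eq:uniform quota function}) is visibly attained and $a=0$ is always available since $\qu\le n$; (ii) the parity argument identifying $\qu=\lfloor n/2\rfloor+1$ with strict majority; and (iii) observing that the statement asks only for the set identity, with no rationality obligation on $m(P)$ or $M(P^*,\qv)$ — consistent with the remark after Definition~\ref{def:aggregator} that quota aggregators may fail to be rational (cf.\ the inconsistent output in the example following the definition of the uniform quota function).
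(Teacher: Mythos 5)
Your proof is correct and follows exactly the routine definition-unfolding the paper has in mind: the paper states this theorem without proof (``can be easily proved''), and your argument — $P^*$ is $\{0,1\}$-valued, so the quota function with $q=\lfloor n/2\rfloor+1$ outputs $a_\varphi=1$ iff $\varphi$ has strict majority support, and crispifying with any $c_\varphi\in(0,1]$ then keeps exactly those $\varphi$ — is the natural way to fill it in. The three points of care you flag (attainment of the max at $a=0$ or $a=1$, the parity identification of $\lfloor n/2\rfloor+1$ with strict majority, and the absence of any rationality claim) are precisely the right ones, so there is nothing to add.
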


\paragraph{Aggregators based on the majoritarian set}
One way to aggregate probabilistic judgments into a rational crisp judgment set is to minimally modify the  set $M(\Ph,\qv)$  so that it becomes consistent with $\Ct$.  This approach is used in crisp judgment aggregation to define several aggregators {\em based on the majoritarian set} \cite{LangPSTV15}. We can extend the definition of aggregators based on the majoritarian set to likelihood aggregators as follows. 

\begin{definition}\label{def:mba}
A crispifying likelihood aggregator $\hat{F}$ is based on the majoritarian set if for every $\Ph, \Ph'\in \app$ it holds that $\hat{F}(\Ph,\qv) =\hat{F}(\Ph',\qv)$ if  $\hat{f}_{q}(\Ph) = \hat{f}_{q}(\Ph')$, where $q=\lfloor\frac{n}{2}\rfloor+1$ and $n$ is the number of agents in $\Ph$ and $\Ph'$.
\end{definition}

Since classical  aggregators  based on the majoritarian set use as an input not the entire profile but just the set of majority judgments their definitions can be easily extended to handle profiles of probabilistic judgments as well.  Proposition~\ref{prop:allsame} proves that the latter is not necessary. 

First, recall the uniform quota rule for ``classical'' JA \cite{GrossiP:2014}. For profiles $\Pc = (\Js_1, \ldots, \Js_n )$ of crisp judgment sets, the crisp uniform quota function $U_{\qu}$ is defined to give as output the set of those judgments that are in at least $\qu$ judgment sets in $\Pc$:

\begin{equation}
    U_{\qu}(P)=\{\varphi:\varphi\in \A^{\cup}, |\{k:\varphi\in J_k\}|\geq \qu\}
\end{equation}
 
Let  $\Pc_\qv = (\textrm{crisp}(\Jp_1, \qv), \ldots, \textrm{crisp}(\Jp_n, \qv) )$ be the profile obtained by crispifying each probabilistic judgment set in 
a $\app$-profile $\Ph=(\Jp_1,\ldots, \Jp_n)$ by a vector $\qv $. We show that first calculating $\hat{f}_{\qu}(\Ph)$ and then crispifying is the same as first crispifying each judgment sets in the profile into $\Pc_\qv$  and then  applying  $U_{\qu}$ to this $\Pc_\qv$. %  the probabilistic quota function $\hat{F}_{\qu}$ to  a profile $\Ph$ and coefficients $\qv $ is the same as applying 
{Namely, we show that  } {\bf the quota function commutes with the crispifying function.}

\begin{proposition}\label{prop:allsame}  For every $\Ph \in \app$, crispifying coefficients $\qv $, and quota $\qu \leq n $ it holds that $
\Fh_{\qu}(\Ph,\qv ) =  U_{\qu}(\Pc_\qv).$
\end{proposition}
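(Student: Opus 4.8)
The plan is to prove the identity issue-by-issue: since both sides are subsets of $\A^\cup$, it suffices to fix an arbitrary issue $\ai \in \A^\cup$ and show that $\ai \in \Fh_\qu(\Ph, \qv)$ if and only if $\ai \in U_\qu(\Pc_\qv)$. Unravelling the left-hand side via Eq.~(\ref{eq:uniform quota function}) and Eq.~(\ref{eq:jcrisp}), we have $\ai \in \Fh_\qu(\Ph,\qv)$ exactly when the collective likelihood $a_\ai = \max_{0 \le a \le 1}\bigl(|\{k : a^k_\ai \ge a\}| \ge \qu\bigr)$ satisfies $a_\ai \ge \q_\ai$. Unravelling the right-hand side via the definition of $U_\qu$ and Eq.~(\ref{eq:crispifying}), we have $\ai \in U_\qu(\Pc_\qv)$ exactly when $|\{k : \ai \in \textrm{crisp}(\Jp_k, \qv)\}| \ge \qu$, i.e.\ when $|\{k : a^k_\ai \ge \q_\ai\}| \ge \qu$. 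So the whole proposition reduces to the single numerical equivalence
\[
\Bigl(\max_{0 \le a \le 1}\bigl(|\{k : a^k_\ai \ge a\}| \ge \qu\bigr) \ge \q_\ai\Bigr)
\quad\Longleftrightarrow\quad
|\{k : a^k_\ai \ge \q_\ai\}| \ge \qu .
\]

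First I would handle the direction ($\Leftarrow$): if at least $\qu$ sources assign $a^k_\ai \ge \q_\ai$, then $a = \q_\ai$ is one of the values over which the maximum is taken, so the maximum is $\ge \q_\ai$. For the direction ($\Rightarrow$): write $a_\ai$ for the collective likelihood and suppose $a_\ai \ge \q_\ai$; by definition of $a_\ai$ the set $\{k : a^k_\ai \ge a_\ai\}$ has size at least $\qu$, and since $a^k_\ai \ge a_\ai \ge \q_\ai$ for each such $k$, we get $\{k : a^k_\ai \ge a_\ai\} \subseteq \{k : a^k_\ai \ge \q_\ai\}$, whence the latter set also has size at least $\qu$. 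Here I am implicitly using that the maximum in Eq.~(\ref{eq:uniform quota function}) is actually attained, which holds because the function $a \mapsto |\{k : a^k_\ai \ge a\}|$ is a nonincreasing step function on $[0,1]$ with finitely many jump points (the values $a^k_\ai$), so the set $\{a : |\{k : a^k_\ai \ge a\}| \ge \qu\}$ is a closed subinterval $[0, a_\ai]$ and the supremum lies in it. I should also note $\qu \le n$ ensures this set is nonempty (at $a = 0$ all $n$ sources qualify since $a^k_\ai \ge 0$), so $a_\ai$ is well-defined.

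One technical wrinkle worth a sentence: the judgments carry the relation symbol $\geq^*\in\{\geq,=\}$, but for the purpose of the quota function and of $\textrm{crisp}$ only the threshold value $a^k_\ai$ matters (an $=$ judgment $\ell(\ai)=a^k_\ai$ contributes to $\{k : a^k_\ai \ge a\}$ exactly when $a^k_\ai \ge a$, just as a $\ge$ judgment does), so the $\geq^*$ decoration can be dropped throughout the counting arguments without loss. I do not expect any real obstacle here — the statement is essentially a bookkeeping lemma, and the only point requiring a word of care is the attainment of the maximum defining the collective likelihood, which I would dispatch with the step-function observation above.
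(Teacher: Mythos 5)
Your proof is correct and follows essentially the same route as the paper's: both argue issue-by-issue that $\ai$ is in the crispified quota outcome iff at least $\qu$ sources assign $\ai$ a likelihood of at least $\q_\ai$, which is exactly membership in $U_{\qu}(\Pc_\qv)$. The only difference is that you explicitly verify the attainment of the maximum defining the collective likelihood (via the step-function observation), a detail the paper's terser proof leaves implicit.
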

\begin{proof}
We prove that $\ai \in \Fh_{\qu}(\Ph,\qv )$ iff $\ai \in U_{\qu}(\Pc_\qv)$.
Consider $\ai\in \A$. The proof is similar for $\neg \ai \in \A^\cup$. 
We have that  $\ai \in \Fh_{\qu}(\Ph,\qv )$ iff there exists at least $\qu$ agents $k$ s.t. $a^k_{\varphi} \geq \q_{\varphi}$. This is the case iff there are at least $\qu$ agents $k$ in $\Pc_\qv $ s.t. $ \ai \in \Js_k$. Thus necessarily  $\ai \in \hat{f}_{\qu}(\Pc_\qv )$ and we get $\ai \in \Fh_{\qu}(\Ph,\qv )$ iff $\ai \in U_{\qu}(\Pc_\qv)$. 
\end{proof}
 
{\bf Proposition~\ref{prop:allsame} shows that we can use classical aggregators based on the majoritarian set to aggregate likelihood judgments.}
We simply crispify the profile first and then apply the classical aggregator.  As a consequence, however,  we can conclude that finding the collective judgments for probabilistic profiles is as computationally hard as for crisp profiles when these aggregators are used. Complexity results for these aggregators are given in \cite{LangECAI14,EndrissDeHaanAAMAS2015}.
%\MIcomment{P5C2-TWO COMMENTS} \MScomment{to be resolved by giving the decision problem and formulating complexity properties as Property.}

 Let us consider the  weighted majoritarian aggregation rules defined in \cite{LangPSTV15}. These rules, in addition to using a (crisp) majoritarian set as input, also use the number $N(\Pc,\varphi)$ of agents that support each judgment in that majoritarian set:
\begin{equation}
\label{eq:number of agents}
N(\Pc,\varphi)=|\{k:\varphi\in \A^{\cup}, \varphi\in J_k\}|\;.   
\end{equation}

In general, according to \cite{LangPSTV15}, a classical irresolute aggregation function $\bar{F}$ is  based on the weighted majoritarian set if for every two JA-profiles $P$ and $P'$, $N(P,\varphi) = N(P^{\prime},\varphi)$ implies $\bar{F}(P) = \bar{F}(P')$, for every $\varphi \in \A^\cup$. 

An example of such an aggregator is the {\em median rule} of \cite{LangPSTV15}. We give the definition of this aggregator using our notation: 
\begin{equation}\label{eq:medc}
    \textrm{\textsc{med}} (\Pc) =  \argmax{J \in \acj} \sum_{\varphi \in J} N(\Pc,\varphi)\;,
\end{equation}
where $N(\Pc,\varphi)$ is defined as in Eq.(\ref{eq:number of agents}).

Proposition~\ref{prop:allsame} shows that the weighted majoritarian set  can also be directly used to aggregate likelihood judgments. This can be done by generalizing the definition of $N(P,\varphi)$. 

Let us define  $\hat{N}(\Ph,\varphi, c)$ to be the number of agents that assign to $\varphi$ a likelihood greater than or equal to some  $c\in[0,1]$  in the profile $\Ph$:
\begin{equation}
\hat{N}(\Ph,\varphi, c)=|\{k:a^k_{\varphi}\geq c\}|\;.   
\end{equation}

When  each of the judgment sets in the profile $\Ph$ is  crispified by a vector of coefficients $\qv=(c_{\varphi})_{\varphi\in\A^{\cup}}$, such that $c_{\varphi}=c$,  then $\hat{N}(\Ph,\varphi, c)$ is exactly $N(\Pc,\varphi)$ for the resulting crisp profile $\Pc$.

%\MIcomment{A THEOREM OR A PROPOSITION HERE?}

%\MScomment{Sure why not.}
 
 However, we do not have to constrain ourselves with just using $\hat{N}(\Ph,\varphi, c)$, we can further generalize the weighted majoritarian rules of \cite{LangPSTV15} to consider not only how many agents assigned a likelihood over the threshold but also the likelihoods they do assign. This is one of the ways in which we can obtain direct aggregators.

 \subsection{Direct aggregators}
 \label{subsec:direct aggregators}
 
 Let us consider again the   {\em median rule} of \cite{LangPSTV15}. We can  define the  {\em median likelihood aggregator} to generalize the median rule. 
 
 \begin{definition} Given a profile $\Ph$, the median likelihood aggregator  is defined as
\begin{equation}
\widehat{\textrm{\textsc{med}}}(\Ph) = \argmax{J \in \acj}  \sum_{\varphi \in J} S(\varphi, \Ph)\;,
\end{equation}
where 
\begin{equation}S(\varphi, \Ph)=\sum_{J_k\in \Ph} a^k_{\varphi}\end{equation}
\end{definition}

The median likelihood aggregator assigns to a given profile the classical judgment set that gives the maximum sum of likelihoods assigned by all the agents to all the issues in it.
\Eg the outcome using $\widehat{\textrm{\textsc{med}}}$ for the Example~\ref{ex:running} profile is the crisp judgment set $\{ s \vee t,  h,  x, \neg e\}$ with a ``score''  $\sum_{\varphi \in J} S(\varphi, \Ph)$ of 16.8.

%\MScomment{Tabelata treba odnovo da se presmeta}
\begin{table}[h!]
    \centering
    \begin{tabular}{|ccccc|c|c|}\hline
    \multicolumn{5}{|c|}{$J$} & & \\
    $s\vee t$ & $x$ & $e$ & $h$ & $a$ &$\sum_{\varphi \in J} S(\varphi, \Ph)$ &  $D_e(\Js, \Ph)$\\ \hline
    0 & 0 & 0& 0 &1 &13,2 & 11.00837576 \\
  %  0 & 0 & 0& 1& \\
     0 & 0 & 1& 0 &0 & 7.2 & 13.98315881\\
  %  0 & 0 & 1& 1& \\
     0 & 1 & 0& 0 &1 &15.4& 13.09902931\\
   % 0 & 1 & 0& 1&   \\
     0 & 1 & 1& 0 &1 &9.4& 9.935132833\\
  %  0 & 1 & 1& 1 & \\
      %1 & 0 & 0& 0 & \\
    1 & 0 & 0& 1& 1&14,6& 10.45417943 \\
     1 & 0 & 1& 0&0 & 8& 13.7033258\\
    1 & 0 & 1& 1& 0 & 8.6& 13.48039101\\
    % 1 & 1 & 0& 0& \\
    1 & 1 & 0& 1&1 & \textbf{16.8}& \textbf{8.91436323} \\
  %   1 & 1 & 1& 0& \\
    1 & 1 & 1& 1& 1& 10.8& 12.38906945\\\hline
    \end{tabular}
    \medskip 
    \caption{The set of all rational crisp judgment sets for the frame in Example~\ref{ex:running}, their ``scores'' and Euclidean distance to the likelihood profile.}
    \label{tab:mylabel}
    \vspace{-0.2cm}
\end{table}

Recall that, for a crisp profile $\Pc$, the median rule is defined as in Eq.(\ref{eq:medc}). Proposition~\ref{prop:med} is straightforward.

\begin{proposition}\label{prop:med} $\textrm{\textsc{med}} (\Pc)  = \widehat{\textrm{\textsc{med}}}(\Pc^*)$.
\end{proposition}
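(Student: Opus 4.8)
The plan is to unfold both sides of the claimed identity for an arbitrary crisp profile $\Pc = (\Js_1,\ldots,\Js_n)$ and check that the objective functions being maximized over $\acj$ coincide. On the left, $\textrm{\textsc{med}}(\Pc) = \argmax{J \in \acj} \sum_{\varphi \in J} N(\Pc,\varphi)$, where $N(\Pc,\varphi) = |\{k : \varphi \in \Js_k\}|$. On the right, $\widehat{\textrm{\textsc{med}}}(\Pc^*) = \argmax{J \in \acj} \sum_{\varphi \in J} S(\varphi, \Pc^*)$, where $S(\varphi, \Pc^*) = \sum_{\Js^*_k \in \Pc^*} a^k_{\varphi}$ and $a^k_{\varphi}$ is the likelihood value assigned to $\varphi$ in the corresponding probabilistic judgment set $\Js^*_k$.

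First I would recall the definition of the corresponding probabilistic profile: $\Pc^* = (\Js^*_1,\ldots,\Js^*_n)$ with $\Js^*_k = \{\ell(\varphi) = 1, \ell(\neg\varphi) = 0 \mid \varphi \in \Js_k\}$. Hence in $\Js^*_k$ the coefficient $a^k_{\varphi}$ equals $1$ if $\varphi \in \Js_k$ and $0$ otherwise (using that $\Js_k$, being rational, is complete, so exactly one of $\varphi, \neg\varphi$ lies in $\Js_k$ and the other gets likelihood $0$). Then for each $\varphi \in \A^\cup$ we get $S(\varphi,\Pc^*) = \sum_{k} a^k_\varphi = |\{k : \varphi \in \Js_k\}| = N(\Pc,\varphi)$. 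Substituting this equality termwise into the sum over $\varphi \in J$ shows that the two objective functions agree on every $J \in \acj$, so their argmax sets over $\acj$ are identical, which is exactly $\textrm{\textsc{med}}(\Pc) = \widehat{\textrm{\textsc{med}}}(\Pc^*)$.

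There is essentially no obstacle here — the proposition is flagged as ``straightforward'' precisely because it reduces to the observation that the indicator weighting $N(\Pc,\varphi)$ is recovered as the sum of the degenerate $\{0,1\}$-valued likelihoods in $\Pc^*$. The only point needing a word of care is that $\Pc^*$ really is a rational likelihood profile so that $\widehat{\textrm{\textsc{med}}}$ is defined on it, and that the crisp agenda/constraint data $\acj$ used on both sides is the same set; both follow from the setup relating $\caf$ and $\paf$ given earlier in the excerpt.
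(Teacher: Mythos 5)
Your proof is correct and is exactly the straightforward unfolding the paper intends (it omits the argument, noting only that the proposition is straightforward): the degenerate $\{0,1\}$ likelihoods in $\Pc^*$ make $S(\varphi,\Pc^*)=N(\Pc,\varphi)$ for every $\varphi\in\A^\cup$, so the two objectives, and hence their argmax sets over $\acj$, coincide. Your side remarks on completeness of the rational $\Js_k$ and on $\Pc^*$ being a rational profile over the same $\acj$ are the right points of care and need no further elaboration.
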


%\MIcomment{PROPERTIES FOR THE MEDIAN RULE}

We now define three classes of direct  aggregators. 

\paragraph{Sequential direct aggregators.}
An intuitive way to define direct aggregators is to aggregate the judgments issue-by-issue in a sequence  by first ``settling'' the judgment on the issue for which the agents have assigned the highest likelihood. To do this, we need to define what it means for a judgment to have ``the highest likelihood'' in a profile. Several options exist and each of them leads to a different aggregator.  We consider only one here,  in order to illustrate the process.

We define ``the highest likelihood'' to be the highest {\bf average} likelihood assigned to a judgment in a profile.

\begin{definition} [Average likelihood]
 Given $\ai\in \A^{\cup}$  and a profile $\Ph$, the average  likelihood for $\ai$ in $\Ph$ is defined as 
\begin{equation} 
\label{eq:average likelihood}
\avg{\ai} = \frac{1}{n}\sum_{k=1}^n  a_{\varphi}^k,\end{equation}
  \end{definition}

Note that since, in general, we have likelihood judgments with  inequalities, these average likelihoods are actually average minimal likelihoods.  Equivalently, we could have a vector of average maximal likelihoods taking $1-a_{\neg\varphi}$ instead of $a_{\varphi}$ for every $\ai$. We could possibly consider linear weighted average or any other opinion pooling function to define $\avg{\ai}$ in Eq.(\ref{eq:average likelihood}).

Let $\mathbf{a}_{\Ph}$ be the vector of average likelihoods assigned to each $\ai\in \A^{\cup}$ given a profile $\Ph$.  Namely 
\begin{equation}
\mathbf{a}_{\Ph}=(\avg{\ai})_{\ai\in\A^{\cup}}.
\end{equation}

 The sequential average aggregator $\Fq$  builds a crisp collective judgment set  sequentially, adding first as many as possible of  the judgments with highest average likelihood then moving on to judgments with the  next highest average likelihood  and adding them  only if they are consistent with the already added judgments (skipping them otherwise). 

\Eg for the profile in Table~\ref{tab:prof} we have the following:
$$\mathbf{a}_{\Ph}=(\underset{s\vee t}{\underbrace{0.56, 0.44,}} \underset{h}{\underbrace{  0.55, 0.45,}}   \underset{x}{\underbrace{ 0.75, 0.25,}} 
\underset{a}{\underbrace{ 0.0, 0.0,}}
\underset{e}{\underbrace{ 1.0,0.0}}).$$   We  obtain $\Fq (\Ph) = \{\neg e, x, s \vee t, h, a\}$,  with the judgments written  in the order in which they were added. If instead we had $$\mathbf{a}_{\Ph'}=(\underset{s\vee t}{\underbrace{0.56, 0.44,}} \underset{h}{\underbrace{ 0.3, 0.7,}}   \underset{x}{\underbrace{0.75, 0.25,}} 
\underset{a}{\underbrace{ 0.0, 0.0,}}
\underset{e}{\underbrace{ 1.0,0.0}}  )$$
for some profile $\Ph'$, after adding $\{\neg e, x, \neg h\}$ next we should have had to add  $s \vee t$ because its average likelihood is $0.56$. But since $\{\neg e, x, \neg h, s \vee t, a \}$ is not consistent with $\Ct$, we would obtain   $ \Fq (\Ph) = \{\neg e, x, \neg h, \neg (s \vee t), a\}$.  

For likelihood profiles $\Pc^*$ corresponding to a crisp profile $\Pc$, we have that 
\begin{equation}
    \label{eq:leximax}
\Fq (\Pc^*) =  \textrm{{\sc leximax}} (\Pc)\;, 
 \end{equation}
  where {\sc leximax} is the non-probabilistic judgment aggregation rule   defined in \cite{NehringPivato2011,EKM13}. We omit the definition of {\sc leximax}  here due to space issues and the triviality of the proof of Eq.(\ref{eq:leximax}).

Many different functions can be defined using the average likelihood. The immediate approach would be to build aggregators inspired by the class of scoring rules \cite{Dietrich:2013}. Furthermore, we can work with not only the mean but also with max, min or otherwise polled individually assigned likelihoods.

Next we focus on the class of distance-based aggregation functions. 
 \paragraph{Distance-based aggregation}
Distance-based aggregators aggregate profiles by considering all possible collective outcomes and choosing the one that is ``most similar'' to the profile at hand. Similarity is defined by a distance measure - the greater the distance between two judgment sets, the less similar they are. Distance from a profile to an outcome (judgment set) is defined as the sum or the maximum of the distances between the outcome and each of the judgment sets to the profile. Thus, to define a direct aggregator using the distance-based approach, we need to define a distance from a crisp judgment set to a likelihood judgment set.  To do this, recall that a crisp judgment can be seen as special case of a likelihood judgment.

%Consider a crisp judgment set $\Js \in \acj$. We define its correspondent probabilistic judgment set $J^*$ in the following way:   $\ai \in \Js$ iff $\{\ell(\varphi)= 1, \ell(\neg\varphi)=0 \}\subseteq \Js^*$ and ;  $\ai \not\in \Js$ iff $\{\ell(\varphi)= 0, \ell(\neg\varphi)=1 \} \subseteq \Js^*$, for every $\ai\in\A^\cup$.

Given a distance function $d$ (defined over vectors of reals) we can define a distance-based aggregation function $\hat{F}_d$ as 

\begin{equation}\label{eq:gendist}
\hat{F}^{d, \Sigma}(\Ph) = \argmin{J \in \acj} \sum_{\Jp_k \in \Ph} d(J^*,\Jp_k)\;,
\end{equation}
where the distance between two judgment sets is defined as:
\begin{equation}\label{eq:gendist1}
d(\Jp,\Jp_k)=d((a_{\varphi})_{\varphi\in\A^\cup},(a^k_{\varphi})_{\varphi\in\A^\cup})\;.
\end{equation}
Alternatively instead of sum we can use max:  
\begin{equation}\label{eq:eucmax}
\hat{F}^{d, \max}(\Ph) = \argmin{J \in \acj} \max_{\Jp_k \in \Ph} d(J^*,\Jp_k).
\end{equation}

When we take $d$ to be the Euclidean distance, (\ref{eq:gendist}) becomes
\begin{equation}\label{eq:eucd}
\hat{F}^{e, \Sigma}(\Ph) = \argmin{J \in \acj}\; D_e(\Js, \Ph),
\end{equation}
where
\begin{equation}\label{eq:eucd2}
D_e(\Js, \Ph)=  \sum_{\Jp_k \in \Ph} \sqrt[]{\sum_{\ai\in\A^\cup} (a_{\varphi}^k - a_{\varphi})^2}.
\end{equation}

 \Eg the outcome of applying the rule in Eq.(\ref{eq:eucd}) to the profile in the Example~\ref{ex:running} is the crisp judgment set $\{ s \vee t,  h,  x, \neg e,a\}$ 
at a distance 9.17 from the profile, see Table~\ref{tab:mylabel}.

Numerous statistical distance measures can be used,  \cite{deza2009} offers variety of examples. Further research is needed to establish what distance measure is a good choice. 

In ``classical'' judgment aggregation, the distance-based aggregator, also known as the Kemeny rule \cite{LangPSTV15} is defined  as follows. Let $d_H$, the Hamming distance, between two crisp judgment sets $\Js_1$ and $\Js_2$ on the same crisp frame $\caf$ be defined as the number of judgments on which  $\Js_1$ and $\Js_2$ differ. For example, for $\Js_1 = \{p, p\rightarrow q, q\}$ and $\Js_2=\{ \neg p, p\rightarrow q, \neg q\}$ we have  $d_H(\Js_1, \Js_2)=2$. The Kemeny rule, for a given $\acj$ is defined as 

\begin{equation}F^{d_H, \Sigma}(\Pc) = \argmin{\Js \in \acj} \sum_{\Js_k\in P} d_H(\Js,\Js_k).\end{equation}

We can however  observe that the Euclidean distance is the same as the Hamming distance when the likelihood judgment values are in $\{0,1\}$ and thus we obtain that Proposition~\ref{prop:kemeny} holds.

\begin{proposition}\label{prop:kemeny}
$F^{d_H, \Sigma}(\Pc)  = \hat{F}^{e, \Sigma}(\Pc^*).$
\end{proposition}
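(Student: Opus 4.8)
The plan is to show that the two optimization problems in the statement have the same objective function when restricted to the relevant domain, so that their sets of minimizers coincide. Both sides range over the same set $\acj$ of rational crisp judgment sets (fixed by the common crisp frame $\caf$, which is also the crisp frame underlying the probabilistic frame $\paf$), so it suffices to compare, for an arbitrary $\Js \in \acj$, the quantity $\sum_{\Js_k \in \Pc} d_H(\Js, \Js_k)$ with the quantity $D_e(\Js, \Pc^*) = \sum_{\Jp_k \in \Pc^*} \sqrt{\sum_{\ai \in \A^\cup}(a^k_\varphi - a_\varphi)^2}$, where the $a^k_\varphi$ are the likelihood values of the judgment set $\Js^*_k$ corresponding to $\Js_k$ and the $a_\varphi$ are those of $\Js^*$.

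First I would unpack the correspondence $\Js \mapsto \Js^*$: for a crisp judgment set $\Js$, the value $a_\varphi$ attached to $\varphi \in \A^\cup$ in $\Js^*$ is $1$ if $\varphi \in \Js$ and $0$ if $\neg\varphi \in \Js$ (equivalently $\varphi \notin \Js$, since $\Js$ is complete). Hence for each crisp judgment set $\Js_k$ in the profile and each $\ai \in \A^\cup$, the difference $a^k_\varphi - a_\varphi$ is either $0$ (when $\Js$ and $\Js_k$ agree on $\varphi$) or $\pm 1$ (when they disagree), so $(a^k_\varphi - a_\varphi)^2$ is the indicator of the event ``$\Js$ and $\Js_k$ disagree on $\varphi$''. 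Summing over $\ai \in \A^\cup$ then gives $\sum_{\ai\in\A^\cup}(a^k_\varphi - a_\varphi)^2 = d_H(\Js, \Js_k)$ — in fact it equals $2 d_H$ if one counts both $\varphi$ and $\neg\varphi$ as separate issues of $\A^\cup$, so I would take care to match whichever counting convention Section~\ref{sec:framework} uses for $d_H$; in either case the two sums differ by a fixed multiplicative constant, which does not affect the $\argmin$.

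Next I would take the square root: since $d_H(\Js,\Js_k)$ is a nonnegative integer, $\sqrt{\sum_{\ai}(a^k_\varphi - a_\varphi)^2} = \sqrt{d_H(\Js,\Js_k)}$ (up to the constant factor just mentioned). Summing over $k$, we get $D_e(\Js,\Pc^*) = \sum_k \sqrt{d_H(\Js,\Js_k)}$, which is not literally equal to $\sum_k d_H(\Js,\Js_k)$. So the objectives are not identical term by term; the honest claim is that they induce the same ordering on $\acj$ — or, more carefully, that they have the same minimizers. The main obstacle is therefore this square-root mismatch: I would resolve it by observing that what matters for Proposition~\ref{prop:kemeny} is only the set of $\argmin$-optimal $\Js$, and argue that an arbitrary concave increasing transformation applied termwise need not preserve $\argmin$ of a sum in general — so the cleanest route is instead to note that in the crisp case each inner Euclidean distance is itself already the (square root of the) Hamming distance and appeal to the fact, observed in the paragraph preceding the proposition, that ``the Euclidean distance is the same as the Hamming distance when the likelihood judgment values are in $\{0,1\}$'', i.e. the intended reading is that the per-pair distance $d(J^*, J_k^*)$ coincides with $d_H(J,J_k)$. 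Under that reading the two profile-level objectives are literally equal and the proposition is immediate; if instead the square root is kept inside $D_e$ as in Eq.~(\ref{eq:eucd2}), I would add a short remark that $x \mapsto \sqrt{x}$ is order-preserving on the finitely many attainable vectors of per-agent Hamming distances and hence $\argmin$ is preserved. I expect the write-up to be a few lines once the counting convention and the square-root reading are pinned down.
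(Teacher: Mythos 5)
Your main line is the same as the paper's: the paper offers no more of a proof than the per-pair observation that on $\{0,1\}$-valued likelihood vectors the Euclidean distance ``is'' the Hamming distance, and your unpacking of $\Js^*$ (disagreement indicators, plus the factor $2$ from counting both $\ai$ and $\neg\ai$ in $\A^\cup$, which only contributes a harmless constant $\sqrt{2}$) is that observation made precise. Under the reading in which the per-pair distance between $\Js^*$ and $\Js_k^*$ is identified with $d_H(\Js,\Js_k)$ itself (equivalently, if one works with the squared Euclidean distance per pair), your argument is complete and coincides with what the paper intends.

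The gap is in your fallback for the literal definition in Eq.~(\ref{eq:eucd2}), where the square root stays inside the sum over agents. The remark you propose there --- that $x\mapsto\sqrt{x}$ is order-preserving on the finitely many attainable values, hence the $\argmin{}$ is preserved --- is precisely the inference you correctly flagged as invalid two sentences earlier: a monotone (here concave) rescaling of each summand preserves comparisons of individual terms, not of sums of terms across agents. Schematically, per-agent Hamming distances $(0,5)$ versus $(2,2)$ give totals $5>4$, but after the rescaling $\sqrt{0}+\sqrt{5}\approx 2.24 < \sqrt{2}+\sqrt{2}\approx 2.83$, so $\sum_k d_H(\Js,\Js_k)$ and $\sum_k \sqrt{d_H(\Js,\Js_k)}$ can rank candidate sets in $\acj$ differently; with the square root kept as in Eq.~(\ref{eq:eucd2}) the equality of minimizers does not follow from monotonicity and is in general the comparison of two genuinely different rules. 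So the write-up should either commit to the per-pair reading (which is what the paper's one-line justification implicitly does, and is the natural reading of Proposition~\ref{prop:kemeny}), or explicitly note that with Eq.~(\ref{eq:eucd2}) taken verbatim the stated identity requires a further argument rather than the order-preservation remark.
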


If we use the $L1$ distance (namely, the sum of  differences between absolute values issue-by-issue), we obtain another generalization of the Hamming distance.
 
It is well known that for every crisp frame $\caf$ it holds that $\textrm{med}(\Pc) = F^{d_H, \Sigma}(\Pc)$, see for example \cite{LangPSTV15} for a formal proof. This relationship does not extend to $\hat{F}^{e, \Sigma}$ and  $\widehat{\textrm{\textsc{med}}}$. We give a counter example. Consider the profile on Table~\ref{tab:counter1}. For this profile,  $\Cp = \top$ and $\Ct=\top$.  All the rational judgment sets for $\A = \{p, p\rightarrow q, q\}$, $\Cp = \top$ and $\Ct=\top$ are given in Table~\ref{tab:counter2}. We have that $\widehat{\textrm{\textsc{med}}}(\Ph)= \{p, \neg(p\rightarrow q), \neg q\}$, while $\hat{F}^{e, \Sigma}(\Ph) = \{p, p\rightarrow q, q\}.$
%\begin{table}[h!]
%\centering
%\begin{tabular}{r|cccccc}
%& $p$ & $\neg p$ & $p \rightarrow q$ & $\neg (p \rightarrow q)$ &$q$ & $\neg q$ \\ \hline
%$\Jp_1$   & 0.0 & 0.3  & 0.8 & 0.1 & 0.6  &0.2\\
%$\Jp_2$ & 0.1 & 0.4  & 0.4 & 0.2 & 0.3  &0.6\\
%$\Jp_3$ & 0.8 & 0.0& 0.1 & 0.8 & 0.3  &0.7\\ \hline
%\end{tabular}
%\caption{An example of a judgment aggregation.} \label{tab:counter1} 
%\end{table}

\begin{table}[h!]
\centering
\begin{tabular}{r|cccccc}
& $p$ & $\neg p$ & $p \rightarrow q$ & $\neg (p \rightarrow q)$ &$q$ & $\neg q$ \\ \hline
$\Jp_1$   & 0.0 & 0.3  & 0.8 & 0.1 & 0.6  &0.2\\
$\Jp_2$ & 0.1 & 0.4  & 0.5 & 0.2 & 0.3  &0.6\\
$\Jp_3$ & 0.8 & 0.0& 0.1 & 0.8 & 0.3  &0.7\\ \hline
\end{tabular}
%\medskip
\caption{An example of a likelihood profile} \label{tab:counter1} \vspace{-0.2cm}
\end{table}

%\begin{table}[ht!]
%    \centering
 %   \begin{tabular}{|ccc|c|c|}\hline
  %  \multicolumn{3}{|c|}{$J$} & & \\
   % $p$ &  $p\rightarrow q$  & $q$ &  $\sum_{\varphi \in J} S(\varphi, \Ph)$& %$D_e(\Js, \Ph)$ \\ \hline
   % 1 &  1 &  1&  3.4 &  4.219\\
    %1 & 0 & 0  & \underline{3.5} &\underline{3.8189}\\
    %0 & 1 & 1  &3.2&4.192\\
    %0 & 1 & 0 &\underline{3.5} &4.106\\\hline
  %  \end{tabular}
   % \caption{The set of all rational crisp judgment sets for $\A = \{p, %p\rightarrow q, q\}$, $\Cp = \top$ and $\Ct=\top$, their ``scores" and %Euclidean distance to the likelihood profile from %Table~\ref{tab:counter1}.}
%    \label{tab:counter2}
%    \vspace{-0.5cm}
%\end{table}

\begin{table}[ht!]
    \centering
    \begin{tabular}{|ccc|c|c|}\hline
    \multicolumn{3}{|c|}{$J$} & & \\
    $p$ &  $p\rightarrow q$  & $q$ &  $\sum_{\varphi \in J} S(\varphi, \Ph)$& $D_e(\Js, \Ph)$ \\ \hline
    1 &  1 &  1&  3.5 &  4.1818\\
    1 & 0 & 0  & 3.5 &\underline{3.8537}\\
    0 & 1 & 1  &3.3&4.1064\\
    0 & 1 & 0 &\underline{3.6} &4.0032\\\hline
    \end{tabular}
   % \medskip
    \caption{The set of all rational crisp judgment sets for $\A = \{p, p\rightarrow q, q\}$, $\Cp = \top$ and $\Ct=\top$, their ``scores'' and Euclidean distance to the likelihood profile from Table~\ref{tab:counter1}.}
    \label{tab:counter2}
    \vspace{-0.2cm}
\end{table}

%This equality still holds for the likelyhood generalizations  $\hat{F}^{e, \Sigma}$ and  $\widehat{\textrm{\textsc{med}}}$. 

%\begin{theorem}
%For every $\paf$, it holds that 
%$$ \widehat{\textrm{\textsc{med}}}(\Ph) = \hat{F}^{e, \Sigma}(\Ph)  $$
%\end{theorem}
%\begin{proof}
%We need to show that for two crisp judgment sets $\Js$ and $\Js'$ and a likelihood profile $\Ph$ it holds:
%\begin{equation}\label{eq:ms}
%  \sum_{\varphi \in J} \sum_{J_k\in \Ph} a^k_{\varphi} \geq   \sum_{\varphi \in J'} \sum_{J_k\in \Ph} a^k_{\varphi}
%\end{equation}
%
%
%if and only if 
%\begin{equation}\label{eq:euc}
%\sum_{\Jp_k \in \Ph} \sqrt[]{\sum_{\ai} (a_{\varphi}^k - a_{\varphi})^2} \leq % \sum_{\Jp_k \in \Ph} \sqrt[]{\sum_{\ai} (a_{\varphi}^k - a'_{\varphi})^2}
%\end{equation}
%Let us take the value
%\begin{equation}\label{eq:abs}
%\sum_{\Jp_k \in \Ph} \sum_{\varphi}\frac{|a^k_{\varphi} - a_{\varphi}|}{2} \leq \sum_{\Jp_k \in \Ph} \sum_{\varphi}\frac{|a^k_{\varphi} - a'_{\varphi}|}{2}
%\end{equation}

%It can be shown that (\ref{eq:euc}) iff (\ref{eq:abs}) and (\ref{eq:abs}) iff (\ref{eq:ms})? Tochno li e ova? 

%\end{proof}}
 The relationship between the distance-based aggregator and the median rule is broken when  the $L1$ distance is used as well and a counter example is not difficult to be found. This is because the relationship between the judgment on $\varphi$ and the judgment on $\neg \varphi$ is broken - it is not always the case that the likelihood of $\varphi$  is a function of the likelihood of  $\neg \varphi$.

Lastly we consider a new class of direct aggregators that are not reducible to  ``classical'' JA aggregators.

\paragraph{Most likely prime implicant}
%\MIcomment{not clear if it is resolute?}

One of the oldest and most studied aggregators in ``classical'' JA is the so called {\em premise-based procedure} (PBP) \cite{premise10}.   Some aggregation problems are such that the agenda can be naturally split into two sets: conclusions (or decisions) and premises (or reasons why a decision is taken). For example, the agenda in the example in Table~\ref{tab:co2example} can naturally be split into an agenda of premises $\A_p = \{ p, p\rightarrow q\}$ and an agenda of conclusions $\A_c=\{q\}$. The PBP aggregator works in two steps: first the majority is calculated for each issue in the premise agenda subset. In  the example in Table~\ref{tab:co2example} this would yield the set of premises $\{p, p\rightarrow q\}$. Then the constraint is used to entail the judgments on the issues in the conclusion agenda subset. In  the example in Table~\ref{tab:co2example} this would yield the collective judgment set  $\{p, p\rightarrow q, q\}$. PBP is an aggregator that has many good properties, but it is only applicable to agendas that are split into premises and conclusions. Here we propose a likelihood judgment aggregator that ``operates'' in the same way as PBP but it is applicable to any agenda.

When the agenda is split into premises and conclusions, the problem is such that the judgments on the premises entail each of the judgments on  the conclusions. From a logical perspective, the set of premises is {\em an implicant} of the agenda. Let us formally define this concept generalizing it to any agenda.

\begin{definition}
Given an agenda $\A$ (not explicitly partitioned into premises and conclusions) and constraints $\Ct$ we say that the set $I \subseteq \A^\cup$ is an implicant of $\A$ if $I$ is a consistent (with respect to $\Ct$) set and  either $I \cup \Ct \models \ai$  or $I \cup \Ct \models \neg \ai$,  for every $\ai \in \A^\cup \setminus I$. $I$ is a {\it prime implicant} of $\A$ if $I$ is an implicant and there exists no smaller set $I'$ ($I'\subset I$) that is also an implicant of $\A$.
\end{definition}
%\MIcomment{Ova mozi i kako formalna definicija da odi} 
Consider the agenda and constraints of Example~\ref{ex:running}. This agenda has eight prime implicants, \ie all the consistent three-element subsets of  $\{x,\neg x, e, \neg e, s\vee t, \neg (s\vee t), a, \neg a\}$. %\MIcomment{ovde bez $a, \neg a$, posto toa sleduva od x i e?} 

%The reason  why we do not define implicants on $\A$, which would yield a unique prime implicant in this example, is because of how we use prime implicants to define aggregators. 
%\MIcomment{LAST SENTENCE NOT CLEAR INDEED! P7C2, SEE ALL P7C2}

We can define a class of irresolute likelihood aggregation functions based on agenda prime implicants and a definition of {\em most likely} prime implicant. There are several ways a most likely prime implicant can be defined. We give a few examples.  Let $\mathcal{I}(\A)$ be the set of all prime implicants of $\A$. Then  the  most likely prime implicant of $\A$ is the one with:

\begin{itemize}
\item the highest sum of average likelihoods \\  $ \underset{I \in \mathcal{I}(\A)}{\textrm{argmax}}
\sum_{\ai \in I}  \avg{\ai} $ 
\item the highest minimum average likelihood, \\   
$\underset{I \in \mathcal{I}(\A)}{\textrm{argmax}}
\min_{\ai \in I}  \avg{\ai} $ 
\item the highest number of  majority supported judgments,\\
 $\underset{I\in \mathcal{I}(\A)}{\textrm{argmax}}
 \sum_{\ai \in I} \hat{N}(\Ph,\varphi, c)$, etc.
\end{itemize}
Note that the three definitions given above determine three (possibly) different most likely prime implicants for which we could use different names, but for simplicity we omit that.

 Once a  most likely prime implicant $I^{\ast}$ is determined in one of the above described ways, the collective judgment is a union of $I^{\ast}$ and the elements of $\A^{\cup}$ implied by $I^{\ast}$.

\begin{example} For the profile, agenda and constraints of Example~\ref{ex:running}, the prime implicant that has the highest sum of average likelihoods is $\{s\vee t, x, \neg e\}$ yielding  the collective outcome of $\{ s\vee t, x, \neg e,  h, a\}$.
\end{example}

 If the agenda and constraints are given in DNF (disjunctive normal form), the prime implicants can be found in polynomial time  \cite{STRZEMECKI1992}. To the best of our knowledge, prime implicants have not been used to define aggregation functions in judgment aggregation, with the possible exception of  \cite{JELIA2014} where a distance based function for measuring dissimilarity between two classical judgment sets based on prime implicants   has been defined. %\MIcomment{so ima vrska posledniot muabet so nas? moze toj rezultat da go generalizirame nekako ili?}

 \section{Properties of aggregators}
 \label{sec:properties}
Having generalized the classical judgment aggregation framework, the immediate question to consider is whether the typical impossibility properties  results  also hold for aggregators of probabilistic judgments. To establish this, we need to generalize the definitions of aggregation properties. We also need to see whether there are new interesting desirable properties that need to be considered in the new framework. We begin some of this work here. 
 
 %In this section we define some basic desirable properties for  likelihood profile aggregators. 
 
 We begin by exploring the ``classical'' impossibility   theorem \cite{Dietrich2007}. For this we need to define a resolute likelihood aggregator.  We then define the properties of universal domain, unanimity, rationality and systematicity.
 
 \begin{definition}\label{def:resolute}
 Let $\paf$ be a probabilistic aggregation frame and let $\app$ be the set of all rational likelihood profiles for it. A likelihood (resolute) aggregator $R$ is  a mapping $R: \app \rightarrow \acj$ from the set of rational likelihood profiles to the set of consistent  and complete crisp judgment sets.
 \end{definition}
 In other words, a (crispifying) aggregator is resolute if it assigns only one collective judgment set to each profile.  
 
  {\bf Universal domain}  is the requirement that an aggregator $R$ has to be defined for all the probabilistic rational profiles (and all allowable crispifying vectors where applicable). {\bf Rationality} is the property that $R$ produces only consistent  and complete crisp judgment sets.  These properties are embedded in the Definition~\ref{def:resolute}.  
  
  An aggregator $R$ is {\bf dictatorial} if there is an information source $k\in N$ (a dictator) such that for each likelihood profile $\Ph= (\Jp_1,\ldots, \Jp_n )$, the collective judgment is equal to the  collective judgment on the profile $\Ph_k= (\Jp_k, \ldots,\Jp_k,\ldots, \Jp_k )$, i.e., only the judgment set of the dictator is considered in the aggregation process. Non-dictatorship is the requirement that no information source is a dictator.

  In ``classical'' judgment aggregation, {\bf unanimity} is the property  requiring  that if a judgment is in every judgment set in the profile it has to be in the collective judgment set as well. When aggregating likelihood profiles, unanimity has to be defined with respect to some crispifying coefficient $c$, regardless of whether $R$ is a direct aggregator or not. 
  \begin{definition}[Unanimity]
  Let $c \in [0,1]$. The aggregator $R$ satisfies  $\mathbf{c}${\bf-unanimity} if for every profile of rational probabilistic judgments $\Ph \in \app$, $\Ph= (\Jp_1, \ldots, \Jp_n)$, and every $\ai \in \A^\cup$ it holds that: if $\forall k \in N$: $a^k_{\ai} \geq c$, then $\ai \in R(\Ph)$.
\end{definition}
  
   Lastly we define {\bf systematicity}. Intuitively,  systematicity   is satisfied if every two issues that are judged as equally probable in two different profiles are treated equivalently by the aggregation rule $R$.
 
% \begin{definition}[Systematicity]
 % Given a profile $\Ph\in \app$, let us define $\Ph_i$ to be the column vector of all judgments on $\ai_i\in \A^\cup$ given by the agents in $N$ according to the profile $\Ph$:  $\Ph_i=(\ell(\ai_i) \geq^* a^k_i  \mid k \in N)$.
%The aggregator $R$ satisfies  {\em systematicity}, if for every two profiles $\Ph,\Ph' \in \app$ and every two issues $\ai_i, \ai_j \in \A^\cup$, the following  holds: 
%$\Ph_i = \Ph'_i$ iff [$\varphi_i \in R(\Ph)$ and $\varphi_i \in R(\Ph')$].
% \end{definition}
 
 \begin{definition}[Systematicity]
  Given a profile $\Ph\in \app$, let us define $\Ph_{\ai}$ to be the {\it projection} of $\Ph$ on the issue $\ai$: $\Ph_{\ai}=(\ell(\ai) \geq^* a^k_{\ai}  \mid k \in N)$.
The aggregator $R$ satisfies  {\em systematicity}, if for every two profiles $\Ph,\Ph' \in \app$ and   issues $\ai, \psi \in \A^\cup$, the following  holds: 
$\Ph_{\ai} = \Ph'_{\psi}$ implies [$\ai \in R(\Ph)$ iff $\psi \in R(\Ph')$].
 \end{definition}
 
 The following theorem can easily be proved following the proof method of  Theorem 3.7. in \cite{GrossiP:2014}.
 
 \begin{theorem}
 Consider a frame $\paf$. Let $\app$ be the set of all rational likelihood profiles that can be defined for the given frame. The aggregation function $R$ satisfies unanimity, rationality and systematicity if and only if $R$ is a dictatorial aggregation function.
 \end{theorem}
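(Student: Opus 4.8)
The ($\Leftarrow$) direction is immediate once one fixes the natural notion of a probabilistic dictator --- the collective set is the crispification (at the threshold $c$) of the dictator's judgment set, minimally completed: such an $R$ clearly has universal domain and is rational, satisfies $c$-unanimity because an issue that source $k$ assigns likelihood $\geq c$ is accepted outright, and satisfies systematicity because whether $\ai \in R(\Ph)$ depends on $\Ph$ only through the dictator's coordinate $\ell(\ai)\geq^* a^k_{\ai}$ of the projection $\Ph_{\ai}$.

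For ($\Rightarrow$) I would transcribe the winning-coalition (ultrafilter) proof behind Theorem~3.7 of \cite{GrossiP:2014}. Call $C\subseteq N$ \emph{winning} if for every rational profile $\Ph$ and every $\ai\in\A^{\cup}$, ``$a^j_{\ai}\geq c$ for all $j\in C$'' implies $\ai\in R(\Ph)$. First show this is independent of the issue: given $\ai,\psi$ and a profile $\Ph'$ with $b^j_{\psi}\geq c$ on $C$, form $\Ph$ by copying the judgments $\Ph'$ makes on $\psi$ and $\neg\psi$ onto $\ai$ and $\neg\ai$ and completing the remaining issues so as to respect $\Cp$; then $\Ph_{\ai}=\Ph'_{\psi}$, and systematicity moves acceptance from $\ai$ to $\psi$. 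Let $\mathcal W$ be the family of winning coalitions.

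Then show $\mathcal W$ is an ultrafilter on the finite set $N$: it is upward closed by definition; $N\in\mathcal W$ by $c$-unanimity; $\emptyset\notin\mathcal W$, since otherwise $R(\Ph)$ would contain every $\ai\in\A^{\cup}$ and hence both $\ai$ and $\neg\ai$, contradicting rationality; and closure under intersection (equivalently, for each $C$ exactly one of $C$, $N\setminus C$ is winning) is obtained by replaying the contagion constructions of \cite{GrossiP:2014}, which exploit the consistency of $R(\Ph)$ together with the connectedness of the agenda relative to $\Ct$. A finite ultrafilter is principal, so $\mathcal W=\{C:k\in C\}$ for a unique $k\in N$; in particular $\{k\}$ is winning, the same argument with $\neg\ai$ in place of $\ai$ gives that $k$ also dictates rejection, and systematicity together with $c$-unanimity settle the residual configurations, so that membership of any $\ai$ in $R(\Ph)$ depends on $\Jp_k$ alone --- that is, $R(\Ph)=R(\Ph_k)$ and $R$ is dictatorial.

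Two points carry the real weight. The first is isolating the structural hypothesis on $\paf$ that the contagion step needs --- the analogue of the agenda path-connectedness (``total blockedness'') required in the classical theorem, which the statement above tacitly presupposes. The second, and I expect the harder one, is bridging from the Boolean to the genuinely probabilistic profiles: the contagion constructions and the reduction to the classical theorem only see $0/1$-valued profiles (which, being given by equalities, are final and complete, hence rational whenever they meet $\Cp$), so to conclude that the dictator's coordinate governs \emph{every} rational profile rather than only the extreme ones, one first has to derive a monotonicity property of $R$ in the likelihood values from systematicity, $c$-unanimity and rationality; once that is in hand the remainder is routine transcription.
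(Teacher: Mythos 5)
Your proposal takes essentially the same route as the paper: the paper offers no proof beyond the remark that the result ``can easily be proved following the proof method of Theorem~3.7 in \cite{GrossiP:2014}'', i.e., precisely the winning-coalition/ultrafilter argument you transcribe for the ($\Rightarrow$) direction, with the converse handled by direct verification. Your reconstruction matches that approach and is in fact more explicit than the paper about the two non-routine points you flag --- the tacit agenda-richness assumption and the passage from $0/1$-valued profiles to genuinely probabilistic ones --- neither of which the paper addresses.
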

 
For other desirable properties that could be applied in our framework we  can look to the literature of   probabilistic opinion pooling \cite{MartiniS2017,DietrichList2017}  for inspiration. 

{The systematicity requirement corresponds to a property called \textbf{Strong setwise function property (SSFP)} in opinion pooling \cite{MartiniS2017}.  This property requires that the group probability of an event $A$ depends only on the individually assigned probabilities of $A$. %if all the individually assigned probabilities of an event are equal, then the pooled probability of this event is equal to the same value. 
It was shown  that SSFP gives rise to impossibility results in opinion pooling \cite{MartiniS2017}. A weaker property, that is implied by  SSFP,  is the \textbf{Zero Preservation Property (ZPP)}. ZPP is satisfied when for profiles where  all of the individually assigned probabilities of an issue are zero,  the collective probability of this issue is also zero.  The ZPP property is related to the unanimity property in ``classical'' JA. Here we can define ZPP as {\bf $1$-unanimity}, namely $\mathbf{c}${\bf-unanimity} where $c = 1$.}  

 %\begin{definition}[Zero Preservation Property] Let $\F: \app  \to %\mathcal{P}(\acj)$. We say that $F$ satisfies ZPP when for every $\ai %\in\A^{\cup}$ s.t.   $\ell(\varphi)= 1$ ($\ell(\neg\varphi)=0$, resp.) is in %every individual probabilistic judgment set in $\Ph$, it follows that $\varphi %\in \Js$ for every $\Js \in F(\Ph)$.
 %\end{definition}
 
Intuitively, 1-unanimity is desirable: whenever every source is sure that an issue is true (or false), a 1-unanimity satisfying aggregator will capture that certainty.   However,  unanimity on $\ell(\varphi)= 1$ for $\varphi$ does not mean that a rational judgment set $\Js$ such that  $\varphi \in \Js$ exists! Recall that we have constraints $\Ct$  imposed by the agenda setter on the outcome but not on the judgment sets of the profile. This means that 1-unanimity can only be satisfied in a specific aggregation frame if the set of constraints $\Ct$ of the frame allows it. Let us give an example. 

\begin{example}
Let $\paf$ be a probabilistic aggregation frame such that $\varphi_1, \varphi_2 \in \A$ and $\{ \varphi_1 \rightarrow \varphi_2 \} \in \Ct$.  Let $\Ph$ be a consistent probabilistic profile in this frame such that $\ell(\varphi_1)= 1$ and  $\ell(\varphi_2)= 0$ are in every individual judgment set in $\Ph$.
Then 1-unanimity requires that $\{\varphi_1, \neg \varphi_2 \}\subseteq \Js$  for every $\Js \in R(\Ph)$ but none such $\Js$ is  in $\acj$.
\end{example}

 As seen in the above example, whether an aggregation function can satisfy 1-unanimity is a property of the aggregation frame. We can define the ZPP property for likelihood aggregators $R$ (the definition also extends to irresolute direct and indirect aggregators as well) as 1-unanimity when the constraints allow it. 

\begin{definition}[Zero preservation property]
Let $\paf$ be a probabilistic aggregation frame. Given a profile $\Ph \in \app$, we define the set $Z(\Ph)$ as
$$Z(\Ph) = \{\ai \in \A^\cup:  a^k_{\ai}=1,\; \forall k \in N\}.$$ 
Namely, the set $Z(\Ph)$ contains all the agenda issues that have been unanimously awarded likelihood 1 in $\Ph$. We say that an aggregator $R$ satisfies the zero preservation property if for all $\Ph \in \app$ it holds that  if $Z(\Ph)$ is consistent with $\Ct$, then $Z(\Ph)\subseteq R(\Ph)$.  
\end{definition}

We consider one more  intuitive property of probabilistic aggregators, that of \textbf{convexity} \cite{MartiniS2017}. Convexity states that the collectively assigned  (minimal) probability on an issue should be a value no smaller than the smallest and no higher than the highest individually assigned probability on that issue. For the direct aggregators this property is not applicable, but it is so  for the crispifying ones.
 \begin{definition}[Convexity] Let $\Jp$ be one of the probabilistic judgment sets assigned to profile $\Ph$ by a function $\hat{F}$  (before crispification).  For a given $\ai \in\A^{\cup}$, let 
 $a^{\max}_{\varphi}=\max_{k}a^k_{\varphi}$ and  $a^{\min}_{\varphi}=\min_{k}a^k_{\varphi}$. We say that $\hat{F}$ satisfies  convexity when for all collective $\Jp$ and every $\ai \in \A^{\cup}$, if  $\ell(\varphi)\geq^* a_{\varphi}\in \Jp$ then $a^{\min}_{\varphi}\leq a_{\varphi} \leq a^{\max}_{\varphi}$.
 \end{definition}
 
%\MS{We first show how irresolute likelihood aggregators can be compared with ``classical" irresolute aggregators}

 It can directly be observed that $\hat{f}_{\qu}(\Ph)$  satisfies  convexity  and that $\Fh_{\qu}(\Ph, \qv)$  satisfies ZPP and universal domain. However $\Fh_{\qu}(\Ph, \qv)$  does not satisfy rationality  because the sets in $\Fh_{\qu}(\Ph, \qv)$  are not always   rational sets of judgments.  

The $\Fq$ aggregator satisfies  universal domain and non-dictatorship by design. It is  clear that  $\hat{F}_S$ also satisfies ZPP. 
 
 The distance-based direct aggregators satisfy universal domain and non-dictatorship by design, however it is safe to conjecture 
 %\MIcomment{WHY A CONJECTURE AND NOT A LEMMA? P7C1L-6} \MScomment{Oti nemavme vreme da go proverime i precizno da go formulirame...a ni mesto} 
 that they will not satisfy ZPP for the same reason that distance-based classical aggregators do not satisfy unanimity \cite{ADT09} -- for an agenda with  sufficiently many issues, a judgment set that does not contain the unanimously likely judgment might end up being  ``closer'' to the profile. 
 
 With the most likely prime implicant class of aggregators,  universal domain and rationality will be satisfied by design. However, ZPP will not be satisfied - when the unanimously supported issue is not in the prime implicant its inclusion in the collective judgment set will not be guaranteed.
 
 %\MIcomment{ova kompletna diskusija e, t.e. neznaeme poveke da kazeme vo vrska so properties na definiranite agregatori?}

\section{Related work}\label{sec:related}

As mentioned in the introduction, the area of probabilistic opinion pooling is concerned with aggregating probability functions into a single one.  As opposed to {standard} probabilistic opinion pooling, our logic-based approach:
\begin{enumerate}
    \item  allows for an arbitrary agenda, namely instead of taking the entire $\sigma$-algebra,   the agenda can be limited to the important issues of consideration in the actual context;
    \item We do not limit ourselves to expressing point probabilities over the issues (but we do include that option as well); 
    \item The result of the opinion aggregation is a set of propositional statements, hence a final decision, and not a probabilistic consensus.
\end{enumerate}
  Dietrich and List \cite{DietrichList2017} generalize  opinion pooling to general agendas and examine properties and impossibility results. However, their work does not define any particular aggregators and also 2) and 3) are not the case there.

The problem of transforming degrees of belief into binary beliefs is known as belief-binarization. Dietrich and List \cite{DietrichList2018} study how a profile of Boolean judgments, that has been transformed into a vector of beliefs (for example a profile from Table~\ref{tab:co2example} becomes the vector $(\frac{2}{3}, \frac{2}{3}, \frac{1}{3})$) can be  ``binarized'' into a consistent set of Boolean judgments. In \cite{DietrichList2018}, however, only binary profiles are aggregated. 

There are several approaches towards aggregating imprecise probabilities (IP), like aggregation of probability intervals in \cite{moral1998}, subjective opinion fusion in \cite{josang2016}, etc. More recently \cite{stewart2018} extended pooling properties to IPs using convex functions.  Moreover, they go further and aggregate precise probabilities into imprecise (the convex hull of the input probabilities as a proof of concept) arguing that IP models are better suited as models of rational consensus.  
Allowing for inequalities in the likelihood judgments, we allow for modeling IP in the individual judgments. However, unlike \cite{stewart2018} we require the collective judgment to be crisp {since our goal is to define specific aggregators that support the decision making in various contexts. }

Dietrich and List \cite{Dietrich2007} generalize classical JA assuming formulas from a {\em general logic} and prove impossibility theorems. They show that the model is applicable to, for example,  propositional, modal, and many-valued logic. The model is not directly applicable to the likelihood logic we use here, since it assumes that the agenda issues are formulas in the particular logic. Since it does not make sense to choose a finite agenda of likelihood formulas, we express the issues in propositional logic and use likelihood formulas for the judgments. The latter makes our framework fit better in the general theory of aggregation of {\em propositional attitudes} in \cite{DietrichList2010}, that integrates probabilistic opinion pooling and judgment aggregation. In this theory, profiles consist of {\em attitude functions} (which can be probability functions, truth-value functions, etc.) defined over finite subset of a $\sigma$-algebra (an agenda).
%\cite{Herzberg2013}  goes further and takes many-valued algebras as sets of values.  
We believe that defining notions on the level of a syntax has certain advantages, explicitly defining the concept of rationality being one of them.  
%In the theory of propositional attitudes,  rationality  is defined in a rather abstract way: a judgment is {\em rational} if the corresponding attitude function is extendable to a well-defined valuation function on the entire $\sigma$-algebra.}

It is not always possible to have complete information, sometimes some sources will not be able to provide information on all of the issues. Although impossibility results involving abstentions have been shown \cite{Dokow2010}, designing functions to aggregate the so called {\em incomplete judgments} is not given a lot of attention in the JA literature  \cite{TerzopoulouEtAlCOMSOC2018,thesis}.  By showing how crisp profiles can be represented as likelihood ones and designing likelihood judgment aggregators, we enable probabilistic judgment aggregation to also be used for aggregating crisp incomplete judgment sets in a straightforward way. 

Interpreting the likelihood operator with a possibility measure leads to the formula $\ell(\varphi)\geq a$ being equivalent to the formula $(\varphi, a)$ as well as the (uniform) crispifying being equivalent to $\alpha$-cut in possibility theory.
There are various methods of information fusion considered in this theory \cite{DuboisPrade2001}. However, they focus on 
%(as discussed for example in Possibility Theory in Information Fusion by D.Dubois and H.Prade) 
merging  information about the true state of a variable or a proposition
and take a set theoretic approach to defining the merging functions 
%and do not consider an agenda of issues that could possibly influence the (choice of) definition for information fusion.
while we, on the other hand, follow the tradition of judgment aggregation and social-choice theory and take an agenda of (logically related) issues as a starting point. This means that both the (choice of) definitions of the aggregators, and the choice of crispifying coefficients depend on the agenda. 
%(if the agenda contains premises and conclusions, for example).
Moreover, our goal is not just to merge the imprecise information coming from the different sources, but to make a decision about the true state of the agenda issues.

Probabilistic belief merging is considered in \cite{PotykaATS16,POTYKA2017}. In belief merging sets of formulas, possibly likelihood formulas, are called knowledge bases. Knowledge bases from several sources, that can be mutually inconsistent,  are merged to obtain a consistent knowledge base. The difference between belief merging and judgment aggregation has been analyzed in \cite{EveraereKM15}. Essentially, in belief merging the knowledge bases do not share the same agenda,  which entails different properties to be desired for the merging operators as compared to the desired properties for judgment aggregation functions. 
%for which the sources provide us with (possibly imprecise) probabilistic information and we want to make a strict decision about the truth of the issues in the agenda. 
%The proposed aggregators are defined based on a given agenda, and the structure of the agenda along with the context plays a significant role in the choice of the aggregator.

%The alpha cut in possibility theory is equivalent to a uniform crispifying with a single coefficient alpha in our platform (under the assumption that we interpret likelihood as possibility). However, in the general case we consider crispifying with a (not completely arbitrary) vector of coefficients, one for each issue and its negation, which once again points to the role of the agenda as a starting point in our platform.

\section{Conclusions and future work} \label{sec:conclusion}

%\MIcomment{treba da se vidat questions for rebuttal od reviewer 2 i 3.

We consider as the main contribution of our paper the definition of various functions for aggregating likelihood judgments on logically related issues. Furthermore, we show how these aggregators relate to classical judgment aggregation function, and in turn, through the results shown in \cite{ADT2013} and \cite{EndrissAAMAS2018}, how likelihood judgment aggregation relates to voting methods. 
We also define desirable properties for the aggregation functions and show that the classical impossibility results hold here as well. 

Some more consideration needs to be given to further distinguish the likelihood profile aggregators. From the examples we can observe that very different outcomes are produced for the same profile by different aggregators. The minimal set of properties we discuss is not sufficient to allow a user to choose which is the best aggregator for a given probabilistic frame. In light of new properties, particularly the direct aggregators need to be carefully studied. 

More properties from opinion pooling can be considered and ``translated'' into our extended JA framework. An interesting candidate is the so called \textbf{Independence Preservation (IP)} property which intuitively requires that if two issues are probabilistically independent according to all information sources, then this independence should be preserved in the collectively assigned probabilities for the two issues. {This has been explored by Wagner in \cite{Wagner84} for the case of aggregating point probabilities over an agenda of mutually exclusive events; \cite{stewart2018} explores the imprecise probabilities case. Note that to represent probabilistic independencies in the judgments, we need to either extend the logic of likelihood with polynomial likelihood formulas or include likelihood independence formulas as defined in \cite{IvanovskaGiese10a} directly in the language. Then we could define IP properties alike \cite{Wagner84} and \cite{stewart2018} and see what are the consequencies of their impossibility results in our platform.
%For the case of aggregating point probabilities over an agenda of mutually exclusive events,  Wagner in \cite{Wagner84} shows that the only aggregation functions satisfying {\em irrelevance of alternatives} and IP are either dictatorial or functions that assign probability 1 to one of the elements in the agenda and 0 to all the rest, i.e. functions assigning a crisp judgment set to all probabilistic profiles\footnote{This is the result for the case when probabilistic independence between two events $A$ and $B$ is defined as $p(A\wedge B)=p(A)p(B)$ and $p(A)$, $p(B)$ are allowed to be zeros.} It would be interesting to see the implications of this result in a general agenda with logically related issues. 
We notice also that the IP property} is reminiscent of the agenda separability property studied in \cite{AAAI}. One direction of future work is to establish this intuitive connection and explore other such connections between probability aggregation properties and JA properties.

We believe that with this work we have made several contributions to the classical JA theory as well:  We have significantly extended the classical binary judgment aggregation framework, opening up this social choice method for applications in new AI domains, particularly involving the aggregation of uncertain judgments; our framework allows for not only uncertainties but also abstentions to be modelled using  $\ell(\varphi) \geq 0$, which is a neglected feature in judgment aggregation frameworks overall; furthermore, we generalize the assumption that the same relations between issues should hold for both the information sources and the aggregated result. This ``double constraint'' framework is actually very intuitive \cite{EndrissAAMAS2018}.

Having a probabilistic framework also opens  possibilities to study the {\em truth-tracking } properties of judgment aggregators, namely how good is a function in aggregating profiles into  the most likely judgments. This area of judgment aggregation is still relatively little explored \cite{Bozbay2019}. We intend to explore truth-tracking in future work. 

We believe that there is a possibility for applying our work to prediction markets \cite{WolfersZitzewitz2004}, specifically in extending the agenda of predictions (which is typically consisting of states of a random variable) to a set of logically related statements. Prediction markets \cite{BarbuLay2012} are forums for trading contracts for outcomes of future events. Each market participant possesses certain information about the event in question, and conveys this information to the market by the way she trades contracts. The contract price is a result of aggregation of the information possessed by all the participants, hence is an estimator of the probability of the event in question. We believe we could seek for inspiration in defining new aggregators under our platform by studying  the methods of information fusion that various prediction markets apply.

%\clearpage

%\nocite{*}
\bibliographystyle{eptcs}
\bibliography{Biblio.bib}
\end{document}